\newcommand{\hN}{{\hat N}}
\newcommand{\hn}{{\hat n}}
\newcommand{\dr}{{\dot r}}
\def \IG {\textrm{IG}}
\def \KL {\textrm{KL}}
\newcommand{\cX}{\mathcal{X}}
\newcommand{\cM}{\mathcal{M}}
\newcommand{\cA}{\mathcal{A}}
\newcommand{\bR}{\mathbb{R}}
\newcommand{\bN}{\mathbb{N}}
\newcommand{\indic}[1]{\mathbb{I} \left \{ #1 \right \} }
\DeclareMathOperator*{\expect}{{\huge \mathbb{E}}}
\DeclareMathOperator*{\smallexpect}{\mathbb{E}}
\newcommand{\expects}{\expect\nolimits}
\newtheorem{defn}{Definition}
\newtheorem{lem}{Lemma}
\newtheorem{assum}{Assumption}
\newtheorem{cor}{Corollary}
\newtheorem{thm}{Theorem}
\newtheorem{prop}{Proposition}
\newcommand{\xn}{x_{1:n}}
\newcommand{\cbar}{\, | \,}
\newcommand{\cdbar}{\, \| \,}
\newcommand{\csemi}{\, ; \,}
\def \gm {\textsc{dgm}}
\def \PG {\textrm{PG}}
\newcommand{\highlight}[1]{\noindent \textbf{#1}}
\newcommand{\eqnref}[1]{(\ref{eqn:#1})}
\newcommand{\gamename}[1]{\textsc{#1}}
\newcommand{\noappendix}[1]{#1}
\title{\smaller{Unifying Count-Based Exploration and Intrinsic Motivation}}
\author{
Marc G. Bellemare \\
bellemare@google.com \\
\And
Sriram Srinivasan \\
srsrinivasan@google.com \\
\And
Georg Ostrovski \\
ostrovski@google.com \\
\And
Tom Schaul \\
schaul@google.com \\
\And
David Saxton \\
saxton@google.com \\
\\
Google DeepMind \\
London, United Kingdom \\
\And
R\'emi Munos \\
munos@google.com \\
}
\begin{document}

\maketitle

\begin{abstract}
We consider an agent's uncertainty about its environment and the problem of generalizing this uncertainty across states. Specifically, we focus on the problem of exploration in non-tabular reinforcement learning. Drawing inspiration from the intrinsic motivation literature, we use density models to measure uncertainty, and propose a novel algorithm for deriving a pseudo-count from an arbitrary density model. This technique enables us to generalize count-based exploration algorithms to the non-tabular case. We apply our ideas to Atari 2600 games, providing sensible pseudo-counts from raw pixels. We transform these pseudo-counts into exploration bonuses and obtain significantly improved exploration in a number of hard games, including the infamously difficult \gamename{Montezuma's Revenge}.
\end{abstract}

\section{Introduction}

Exploration algorithms for Markov Decision Processes (MDPs) are typically concerned with reducing the agent's uncertainty over the environment's reward and transition functions. In a tabular setting, this uncertainty can be quantified using confidence intervals derived from Chernoff bounds, or inferred from a posterior over the environment parameters. In fact, both confidence intervals and posterior shrink as the inverse square root of the state-action visit count $N(x, a)$, making this quantity fundamental to most theoretical results on exploration.

Count-based exploration methods directly use visit counts to guide an agent's behaviour towards reducing uncertainty. For example, Model-based Interval Estimation with Exploration Bonuses \citep[MBIE-EB;][]{strehl08analysis} solves the augmented Bellman equation
\begin{equation*}
V(x) = \max_{a \in \cA} \left [ \hat R(x,a) + \gamma \smallexpect\nolimits_{\hat P} \left [ V(x') \right ] + \beta N(x,a)^{-1/2} \right ],
\end{equation*}
involving the empirical reward $\hat R$, the empirical transition function $\hat P$, and an exploration bonus proportional to $N(x,a)^{-1/2}$. This bonus accounts for uncertainties in both transition and reward functions and enables a finite-time bound on the agent's suboptimality.

In spite of their pleasant theoretical guarantees, count-based methods have not played a role in the contemporary successes of reinforcement learning \citep[e.g.][]{mnih15human}. Instead, most practical methods still rely on simple rules such as $\epsilon$-greedy. The issue is that visit counts are not directly useful in large domains, where states are rarely visited more than once.

Answering a different scientific question, intrinsic motivation aims to provide qualitative guidance for exploration \citep{schmidhuber91possibility,oudeyer07intrinsic,barto13intrinsic}. This guidance can be summarized as ``explore what surprises you''. A typical approach guides the agent based on change in prediction error, or \emph{learning progress}. If $e_n(A)$ is the error made by the agent at time $n$ over some event A, and $e_{n+1}(A)$ the same error after observing a new piece of information, then learning progress is
\begin{equation*}
e_n(A) - e_{n+1}(A) .
\end{equation*}
Intrinsic motivation methods are attractive as they remain applicable in the absence of the Markov property or the lack of a tabular representation, both of which are required by count-based algorithms. Yet the theoretical foundations of intrinsic motivation remain largely absent from the literature, which may explain its slow rate of adoption as a standard approach to exploration.

In this paper we provide formal evidence that intrinsic motivation and count-based exploration are but two sides of the same coin. Specifically, we consider a frequently used measure of learning progress, \emph{information gain} \citep{cover91elements}. Defined as the Kullback-Leibler divergence of a prior distribution from its posterior, information gain can be related to the confidence intervals used in count-based
exploration. Our contribution is to propose a new quantity, the \emph{pseudo-count}, which connects information-gain-as-learning-progress and count-based exploration.

We derive our pseudo-count from a density model over the state space. This is in departure from more traditional approaches to intrinsic motivation that consider learning progress with respect to a transition model.
We expose the relationship between pseudo-counts, a variant of \citeauthor{schmidhuber91possibility}'s compression progress we call \emph{prediction gain}, and information gain. Combined to \citeauthor{kolter09near}'s negative result on the frequentist suboptimality of Bayesian bonuses, our result highlights the theoretical advantages of pseudo-counts compared to many existing intrinsic motivation methods.

The pseudo-counts we introduce here are best thought of as ``function approximation for exploration''. We bring them to bear on Atari 2600 games from the Arcade Learning Environment \citep{bellemare13arcade}, focusing on games where myopic exploration fails. We extract our pseudo-counts from a simple density model and use them within a variant of MBIE-EB. 
We apply them to an experience replay setting and to an actor-critic setting, and find improved performance in both cases. Our approach produces dramatic progress on the reputedly most difficult Atari 2600 game, \gamename{Montezuma's Revenge}: within a fraction of the training time, our agent explores a significant portion of the first level and obtains significantly higher scores than previously published agents.

\section{Notation}

We consider a countable state space $\cX$.
We denote a sequence of length $n$ from $\cX$ by $\xn \in \cX^n$, the set of finite sequences from $\cX$ by $\cX^*$, write $\xn x$ to mean the concatenation of $\xn$ and a state $x \in \cX$, and denote the empty sequence by $\epsilon$.
A \emph{model} over $\cX$ is a mapping from $\cX^*$ to probability distributions over $\cX$. That is, for each $\xn \in \cX^n$ the model provides a probability distribution
\begin{equation*}
\rho_n(x) := \rho(x \csemi \xn) .
\end{equation*}
Note that we do not require $\rho_n(x)$ to be strictly positive for all $x$ and $\xn$. When it is, however, we may understand $\rho_n(x)$ to be the usual conditional probability of $X_{n+1} = x$ given $X_1 \dots X_n = \xn$.

We will take particular interest in the empirical distribution $\mu_n$ derived from the sequence $\xn$. If $N_n(x) := N(x,\xn)$ is the number of occurrences of a state $x$ in the sequence $\xn$, then
\begin{equation*}
\mu_n(x) := \mu(x \csemi \xn) := \frac{N_n(x)}{n} .
\end{equation*}
We call the $N_n$ the \emph{empirical count function}, or simply \emph{empirical count}.
The above notation extends to state-action spaces, and we write $N_n(x, a)$ to explicitly refer to the number of occurrences of a state-action pair when the argument requires it. 
When $\xn$ is generated by an ergodic Markov chain, for example if we follow a fixed policy in a finite-state MDP, then the limit point of $\mu_n$ is the chain's stationary distribution.

In our setting, a \emph{density model} is any model that assumes states are independently (but not necessarily identically) distributed; a density model is thus a particular kind of generative model. We emphasize that a density model differs from a forward model, which takes into account the temporal relationship between successive states. Note that $\mu_n$ is itself a density model.

\section{From Densities to Counts}\label{sec:prediction_to_counts}

In the introduction we argued that the visit count $N_n(x)$ (and consequently, $N_n(x,a)$) is not directly useful in practical settings, since states are rarely revisited. Specifically, $N_n(x)$ is almost always zero and cannot help answer the question ``How novel is this state?'' Nor is the problem solved by a Bayesian approach: even variable-alphabet models \citep[e.g.][]{hutter13sparse} must assign a small, diminishing probability to yet-unseen states.
To estimate the uncertainty of an agent's knowledge, we must instead look for a quantity which generalizes across states. Guided by ideas from the intrinsic motivation literature, we now derive such a quantity. We call it a \emph{pseudo-count} as it extends the familiar notion from Bayesian estimation.

\subsection{Pseudo-Counts and the Recoding Probability}

We are given a density model $\rho$ over $\cX$. This density model may be approximate, biased, or even inconsistent. We begin by introducing the \emph{recoding probability} of a state $x$: 
\begin{equation*}
\rho'_n(x) := \rho(x \csemi \xn x) .
\end{equation*}
This is the probability assigned to $x$ by our density model after observing a new 
occurrence of $x$. The term ``recoding'' is inspired from the statistical compression literature, where coding costs are inversely related to probabilities \citep{cover91elements}. When $\rho$ admits a conditional probability distribution,
\begin{equation*}
\rho'_n(x) = \Pr\nolimits_\rho(X_{n+2} = x \cbar X_1 \dots X_n = \xn, X_{n+1} = x) .
\end{equation*}
We now postulate two unknowns: a \emph{pseudo-count function} $\hN_n(x)$, and a \emph{pseudo-count total} $\hn$. We relate these two unknowns through two constraints: 
\begin{equation}\label{eqn:system_of_two_equations}
\rho_n(x) = \frac{\hN_n(x)}{\hn} \qquad \rho'_n(x) = \frac{\hN_n(x) + 1}{\hn + 1} .
\end{equation}
In words: we require that, after observing one instance of $x$, the density model's increase in prediction of that same $x$ should correspond to a unit increase in pseudo-count. The pseudo-count itself is derived from solving the linear system \eqnref{system_of_two_equations}: 
\begin{equation}\label{eqn:discrete_pseudo_count}
\hN_n(x) = \frac{\rho_n(x) (1 - \rho'_n(x))}{\rho'_n(x) - \rho_n(x)}  = \hn \rho_n(x) .
\end{equation}

Note that the equations \eqnref{system_of_two_equations} yield $\hN_n(x) = 0$ (with $\hn = \infty$) when $\rho_n(x) = \rho'_n(x) = 0$, and are inconsistent when $\rho_n(x) < \rho'_n(x) = 1$. 
These cases may arise from poorly behaved density models, but are easily accounted for. From here onwards we will assume a consistent system of equations.

\begin{defn}[Learning-positive density model]\label{defn:learning_positive}
A density model $\rho$ is \emph{learning-positive} if for all $\xn \in \cX^n$ and all $x \in \cX$, $\rho'_n(x) \ge \rho_n(x)$.
\end{defn}
By inspecting \eqnref{discrete_pseudo_count}, we see that
\begin{enumerate}
    \item{$\hN_n(x) \ge 0$ if and only if $\rho$ is learning-positive;}
    \item{$\hN_n(x) = 0$ if and only if $\rho_n(x) = 0$; and}
    \item{$\hN_n(x) = \infty$ if and only if $\rho_n(x) = \rho'_n(x)$.}
\end{enumerate}
In many cases of interest, the pseudo-count $\hN_n(x)$ matches our intuition. If $\rho_n = \mu_n$ then $\hN_n = N_n$. Similarly, if $\rho_n$ is a Dirichlet estimator then $\hN_n$ recovers the usual notion of pseudo-count. More importantly, if the  model generalizes across states then so do pseudo-counts.

\subsection{Estimating the Frequency of a Salient Event in \gamename{Freeway}}\label{sec:pseudo_counting_salient_events}

As an illustrative example, we employ our method to estimate the number of occurrences of an infrequent event in the Atari 2600 video game \gamename{Freeway} (Figure \ref{fig:pseudo_counts_atari}, screenshot). We use the Arcade Learning Environment \citep{bellemare13arcade}. We will demonstrate the following:
\begin{enumerate}
    \item{Pseudo-counts are roughly zero for novel events,}
    \item{they exhibit credible magnitudes,}
    \item{they respect the ordering of state frequency,}
    \item{they grow linearly (on average) with real counts,}
    \item{they are robust in the presence of nonstationary data.} 
\end{enumerate}
These properties suggest that pseudo-counts provide an appropriate generalized notion of visit counts in non-tabular settings.

In \gamename{Freeway}, the agent must navigate a chicken across a busy road. As our example, we consider estimating the number of times the chicken has reached the very top of the screen. As is the case for many Atari 2600 games, this naturally salient event is associated with an increase in score, which ALE translates into a positive reward. We may reasonably imagine that knowing how certain we are about this part of the environment is useful. After crossing, the chicken is teleported back to the bottom of the screen.

To highlight the robustness of our pseudo-count, we consider a nonstationary policy which waits for 250,000 frames, then applies the \textsc{up} action for 250,000 frames, then waits, then goes \textsc{up} again. The salient event only occurs during \textsc{up} periods. It also occurs with the cars in different positions, thus requiring generalization.
As a point of reference, we record the pseudo-counts for both the salient event and visits to the chicken's start position.

We use a simplified, pixel-level version of the CTS model for Atari 2600 frames proposed by \citet{bellemare14skip}, ignoring temporal dependencies. While the CTS model is rather impoverished in comparison to state-of-the-art density models for images \citep[e.g.][]{vandenoord16pixel},
its count-based nature results in extremely fast learning, making it an appealing candidate for exploration. Further details on the model may be found in the appendix.

\begin{figure*}[tb]
\center{
\includegraphics[width=4.0in]{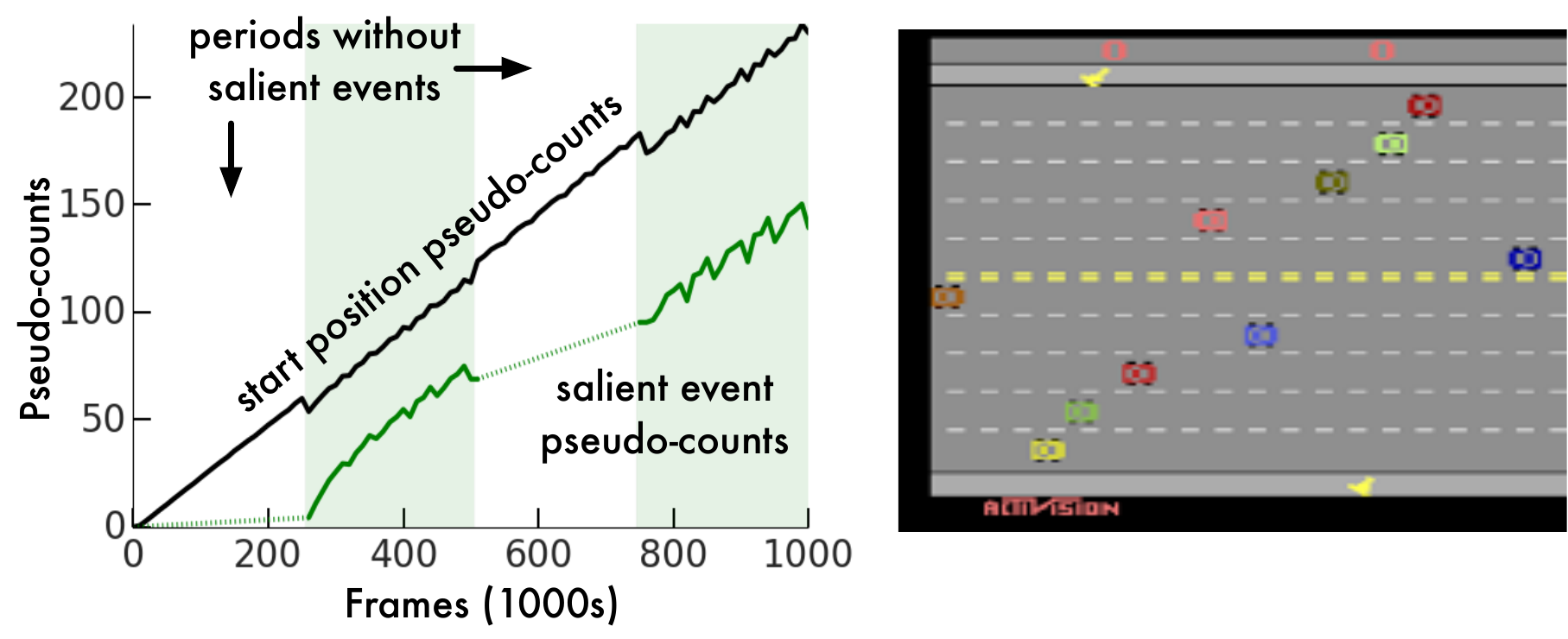}
}
\caption{
Pseudo-counts obtained from a CTS density model applied to \gamename{Freeway}, along with a frame representative of the salient event (crossing the road). Shaded areas depict periods during which the agent observes the salient event, dotted lines interpolate across periods during which the salient event is not observed. The reported values are 10,000-frame averages.\label{fig:pseudo_counts_atari}} 
\end{figure*}

Examining the pseudo-counts depicted in Figure \ref{fig:pseudo_counts_atari} confirms that they exhibit the desirable properties listed above. In particular, the pseudo-count is almost zero on the first occurrence of the salient event; it increases slightly during the 3rd period, since the salient and reference events share some common structure; throughout, it remains smaller than the reference pseudo-count. The linearity on average and robustness to nonstationarity are immediate from the graph. Note, however, that the pseudo-counts are a fraction of the real visit counts (inasmuch as we can define ``real''): by the end of the trial, the start position has been visited about 140,000 times, and the topmost part of the screen, 1285 times. Furthermore, the ratio of recorded pseudo-counts differs from the ratio of real counts. Both effects are quantifiable, as we shall show in Section \ref{sec:asymptotic_results}.

\section{The Connection to Intrinsic Motivation}\label{sec:connection_to_intrinsic_motivation}

Having argued that pseudo-counts appropriately generalize visit counts, we will now show that they are closely related to \emph{information gain}, which is commonly used to quantify novelty or curiosity and consequently as an intrinsic reward. 
Information gain is defined in relation to a \emph{mixture model} $\xi$ over a class of density models $\cM$. This model predicts according to a weighted combination from $\cM$:
\begin{equation*}
\xi_n(x) := \xi(x \csemi \xn) := \int_{\rho \in \cM} w_n(\rho) \rho(x \csemi \xn) \textrm{d}\rho ,
\end{equation*}
with $w_n(\rho)$ the posterior weight of $\rho$. This posterior is defined recursively, starting from a prior distribution $w_0$ over $\cM$:
\begin{equation} \label{eqn:posterior_update}
w_{n+1}(\rho) := w_{n}(\rho, x_{n+1}) \qquad w_n(\rho, x) := \frac{w_{n}(\rho) \rho(x \csemi \xn)}{\xi_n(x)} .
\end{equation}
Information gain is then the Kullback-Leibler divergence from prior to posterior that results from observing $x$:
\begin{equation*}
\IG_n(x) := \IG(x \csemi \xn) := \KL\big(w_n(\cdot, x) \cdbar w_n\big) .
\end{equation*}

Computing the information gain of a complex density model is often impractical, if not downright intractable. However, a quantity which we call the \emph{prediction gain} provides us with a good approximation of the information gain. We define the prediction gain of a density model $\rho$ (and in particular, $\xi$) as the difference between the recoding log-probability and log-probability of $x$:
\begin{equation*}
\PG_n(x) := \log \rho'_n(x) - \log \rho_n(x) .
\end{equation*}
Prediction gain is nonnegative if and only if $\rho$ is learning-positive. It is related to the pseudo-count:
\begin{equation*}
\hN_n(x) \approx \Big ( e^{\PG_n(x)} - 1 \Big )^{-1} ,
\end{equation*}
with equality when $\rho'_n(x) \to 0$. As the following theorem shows, prediction gain allows us to relate pseudo-count and information gain.
\begin{thm}\label{thm:main_result}
Consider a sequence $\xn \in \cX^n$. Let $\xi$ be a mixture model over a class of learning-positive models $\cM$. Let $\hN_n$ be the pseudo-count derived from $\xi$ (Equation \ref{eqn:discrete_pseudo_count}). For this model,
\begin{equation*}
\IG_n(x) \le \PG_n(x) \le \hN_n(x)^{-1} \qquad \text{ and } \qquad \; \PG_n(x) \le \hN_n(x)^{-1/2} .
\end{equation*}
\end{thm}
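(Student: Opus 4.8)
The plan is to reduce everything to the two scalars $\xi_n(x)$ and its recoding probability $\xi'_n(x)$, treating the information-gain inequality separately from the two pseudo-count inequalities. Write $g := \PG_n(x) = \log \xi'_n(x) - \log \xi_n(x)$, and note that applying \eqnref{discrete_pseudo_count} to $\rho = \xi$ and substituting $\xi'_n(x) - \xi_n(x) = \xi_n(x)(e^{g} - 1)$ rearranges the pseudo-count into the convenient form $\hN_n(x) = (1 - \xi'_n(x))\,(e^{g} - 1)^{-1}$, so that $\hN_n(x)^{-1} = (e^{g} - 1)/(1 - \xi'_n(x))$.

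First I would dispatch the two right-hand bounds, which become elementary once $g \ge 0$ is in hand. Since $0 < 1 - \xi'_n(x) \le 1$ on a consistent system, we immediately get $\hN_n(x)^{-1} \ge e^{g} - 1$. The inequality $\PG_n(x) \le \hN_n(x)^{-1}$ then follows from the standard fact $e^{g} - 1 \ge g$; and $\PG_n(x) \le \hN_n(x)^{-1/2}$ follows, after squaring, from $e^{g} - 1 \ge g^2$, which holds for all $g \ge 0$ because $e^{g} - 1 - g^2$ vanishes at $g = 0$ and is increasing on $[0,\infty)$. Both are one-line calculus facts, so the real content of these two bounds is that $1 - \xi'_n(x) \le 1$ lets us discard the denominator.

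The substantive step is $\IG_n(x) \le \PG_n(x)$. I would start from the posterior update \eqnref{posterior_update}, which gives $w_n(\rho, x)/w_n(\rho) = \rho_n(x)/\xi_n(x)$, whence
\[
\IG_n(x) = \KL\big(w_n(\cdot, x) \cdbar w_n\big) = \smallexpect\nolimits_{w_n(\cdot, x)}\!\big[\log \rho_n(x)\big] - \log \xi_n(x).
\]
Subtracting this from $\PG_n(x) = \log \xi'_n(x) - \log \xi_n(x)$ cancels $\log \xi_n(x)$ and reduces the claim to $\smallexpect_{w_n(\cdot, x)}[\log \rho_n(x)] \le \log \xi'_n(x)$. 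I would then use the identity $\xi'_n(x) = \smallexpect_{w_n(\cdot, x)}[\rho'_n(x)]$ (the mixture's recoding probability is the posterior-weighted average of the components' recoding probabilities), invoke learning-positivity $\rho'_n(x) \ge \rho_n(x)$ to lower-bound this by $\smallexpect_{w_n(\cdot, x)}[\rho_n(x)]$, and close the gap with Jensen's inequality for the concave logarithm.

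The delicate point, which I expect to be the main obstacle, is that both the square-root bound and the nonnegativity of $g$ require $\xi$ itself to be learning-positive, and this does not follow trivially from the componentwise assumption. I would establish it by the chain $\xi'_n(x) = \smallexpect_{w_n(\cdot, x)}[\rho'_n(x)] \ge \smallexpect_{w_n(\cdot, x)}[\rho_n(x)] = \smallexpect_{w_n}[\rho_n(x)^2]/\smallexpect_{w_n}[\rho_n(x)]$, where the last equality again uses the posterior update; since $\smallexpect_{w_n}[\rho_n(x)^2] \ge (\smallexpect_{w_n}[\rho_n(x)])^2$ by nonnegativity of variance, this yields $\xi'_n(x) \ge \smallexpect_{w_n}[\rho_n(x)] = \xi_n(x)$, hence $g \ge 0$. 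This variance step is precisely where the ``a mixture learns at least as fast as its components'' intuition enters, and it is what legitimizes dividing by $1 - \xi'_n(x)$ and taking the square root in the bounds above.
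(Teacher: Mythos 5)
Your proposal is correct, and the two right-hand bounds are handled exactly as in the paper: the same rearrangement $\hN_n(x)^{-1} = (1-\xi'_n(x))^{-1}\big(e^{\PG_n(x)}-1\big)$, the observation that $1-\xi'_n(x) \le 1$ lets one drop the denominator, and the two calculus facts $e^g - 1 \ge g$ and $e^g - 1 \ge g^2$ on $[0,\infty)$ (the paper's Lemma \ref{lem:logarithmic_inequality}). Where you genuinely diverge is the inequality $\IG_n(x) \le \PG_n(x)$. The paper proves an exact identity (Lemma \ref{lem:identity_ipd_ig}): introducing the twice-updated posterior $w''_n$, it shows $\PG_n(x) = \IG_n(x) + \KL(w'_n \cdbar w''_n) + \expects_{w'_n}\left[ \PG^{\rho}_n(x) \right]$, and the inequality follows because each discarded term is nonnegative. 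You instead write $\IG_n(x) = \expects_{w'_n}\left[\log \rho_n(x)\right] - \log \xi_n(x)$ and close with the mixture identity $\xi'_n(x) = \expects_{w'_n}\left[\rho'_n(x)\right]$, componentwise learning-positivity, and Jensen's inequality. The two arguments carry the same content---your learning-positivity-plus-Jensen step lower-bounds by zero exactly the quantity $\log \xi'_n(x) - \expects_{w'_n}\left[\log \rho_n(x)\right] = \KL(w'_n \cdbar w''_n) + \expects_{w'_n}\left[\PG^{\rho}_n(x)\right]$ that the paper discards termwise---but yours is leaner, needing no second posterior update, while the paper's identity is stronger: it quantifies the exact gap between prediction gain and information gain and immediately yields the clean statement for non-adaptive model classes that the paper uses in its discussion. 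One last remark: your closing variance argument for learning-positivity of $\xi$ is correct but redundant. You rightly flag that $\PG_n(x) \ge 0$ must be established before invoking the calculus facts and taking the square root, but it already follows from your own first inequality, since $\PG_n(x) \ge \IG_n(x) = \KL(w'_n \cdbar w_n) \ge 0$; this is also how the paper gets it, implicitly, from its identity.
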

Theorem 1 suggests that using an exploration bonus proportional to $\hN_n(x)^{-1/2}$, similar to the MBIE-EB bonus, leads to a behaviour at least as exploratory as one derived from an information gain bonus. Since pseudo-counts correspond to empirical counts in the tabular setting, this approach also preserves known theoretical guarantees. In fact, we are confident pseudo-counts may be used to prove similar results in non-tabular settings.

On the other hand, it may be difficult to provide theoretical guarantees about existing bonus-based intrinsic motivation approaches.
\citet{kolter09near} showed that no algorithm based on a bonus upper bounded by $\beta N_n(x)^{-1}$ for any $\beta > 0$ can guarantee PAC-MDP optimality. Again considering the tabular setting and combining their result to Theorem \ref{thm:main_result}, we conclude that bonuses proportional to immediate information (or prediction) gain are insufficient for theoretically near-optimal exploration: to paraphrase \citeauthor{kolter09near}, these methods produce explore too little in comparison to pseudo-count bonuses. By inspecting (\ref{eqn:discrete_pseudo_count}) we come to a similar negative conclusion for bonuses proportional to the L1 or L2 distance between $\xi'_n$ and $\xi_n$.

Unlike many intrinsic motivation algorithms, pseudo-counts also do not rely on learning a forward (transition and/or reward) model. This point is especially important because a number of powerful density models for images exist \citep{vandenoord16pixel}, and because optimality guarantees cannot in general exist for intrinsic motivation algorithms based on forward models.

\section{Asymptotic Analysis}\label{sec:asymptotic_results}

In this section we analyze the limiting behaviour of the ratio $\hN_n / N_n$. We use this analysis to assert the consistency of pseudo-counts derived from tabular density models, i.e. models which maintain per-state visit counts. In the appendix we use the same result to bound the approximation error of pseudo-counts derived from directed graphical models, of which our CTS model is a special case.

Consider a fixed, infinite sequence $x_1, x_2, \dots$ from $\cX$. 
We define the limit of a sequence of functions $\big ( f(x \csemi \xn) \,: \, n \in \bN \big)$ with respect to the length $n$ of the subsequence $\xn$.
We additionally assume that the empirical distribution $\mu_n$ converges pointwise to a distribution $\mu$,
and write $\mu'_n(x)$ for the recoding probability of $x$ under $\mu_n$. We begin with two assumptions on our density model.
\begin{assum}\label{assum:limit_assumptions}
The limits 
\begin{equation*}
\textnormal{(a)  } r(x) := \lim_{n \to \infty} \frac{\rho_n(x)}{\mu_n(x)} \qquad \textnormal{(b)  } \dr(x) := \lim_{n \to \infty} \frac{\rho'_n(x) - \rho_n(x)}{\mu'_n(x) - \mu_n(x)}
\end{equation*}
exist for all $x$; furthermore, $\dr(x) > 0$. 
\end{assum}
Assumption (a) states that $\rho$ should eventually assign a probability to $x$ proportional to the limiting empirical distribution $\mu(x)$.
In particular there must be a state $x$ for which $r(x) < 1$, unless $\rho_n \to \mu$.
Assumption (b), on the other hand, imposes a restriction
on the learning rate of $\rho$ relative to $\mu$'s.
As both $r(x)$ and $\mu(x)$ exist, Assumption \ref{assum:limit_assumptions} also implies
that $\rho_n(x)$ and $\rho'_n(x)$ have a common limit.

\begin{thm}\label{thm:ratio_of_counts}
Under Assumption \ref{assum:limit_assumptions}, the limit of the ratio of pseudo-counts $\hN_n(x)$ to empirical counts $N_n(x)$ exists for all $x$. This limit is 
\begin{equation*}
\lim_{n \to \infty} \frac{\hN_n(x)}{N_n(x)} = \frac{r(x)}{\dr(x)} \left (\frac{1 - \mu(x) r(x)}{1 - \mu(x)} \right ) . 
\end{equation*}
\end{thm}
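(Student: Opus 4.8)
The plan is to substitute the closed form \eqnref{discrete_pseudo_count} for $\hN_n(x)$ directly and then factor the ratio $\hN_n(x)/N_n(x)$ into a product of pieces, each controlled by exactly one part of Assumption~\ref{assum:limit_assumptions}. The only nontrivial ingredient beyond the assumptions is an exact expression for the empirical recoding increment. Since $\mu_n(x) = N_n(x)/n$ and $\mu'_n(x) = (N_n(x)+1)/(n+1)$, a one-line computation gives
\begin{equation*}
\mu'_n(x) - \mu_n(x) = \frac{n - N_n(x)}{n(n+1)} = \frac{1 - \mu_n(x)}{n+1} ,
\end{equation*}
so the empirical increment decays like $1/n$ and conveniently carries the factor $1 - \mu_n(x)$.

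Next I would rewrite the ratio. Starting from $\hN_n(x) = \rho_n(x)(1-\rho'_n(x))/(\rho'_n(x)-\rho_n(x))$ and $N_n(x) = n\,\mu_n(x)$, and inserting the empirical increment above into the denominator, one obtains the identity
\begin{equation*}
\frac{\hN_n(x)}{N_n(x)} = \frac{\rho_n(x)}{\mu_n(x)} \cdot \frac{n+1}{n} \cdot \frac{1 - \rho'_n(x)}{1 - \mu_n(x)} \cdot \frac{\mu'_n(x) - \mu_n(x)}{\rho'_n(x) - \rho_n(x)} ,
\end{equation*}
which is exact for every $n$ (the $1-\mu_n(x)$ and $n+1$ introduced by the increment cancel back out). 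Each factor now converges separately: the first to $r(x)$ by Assumption~\ref{assum:limit_assumptions}(a), the second to $1$, and the last to $1/\dr(x)$ by Assumption~\ref{assum:limit_assumptions}(b). For the third factor I would invoke the remark following the assumption: since $r(x)$ and $\mu(x)$ both exist, $\rho_n(x)$ and $\rho'_n(x)$ share the common limit $r(x)\mu(x)$, so $1 - \rho'_n(x) \to 1 - \mu(x)r(x)$ while $1 - \mu_n(x) \to 1 - \mu(x)$. Multiplying the four limits yields exactly $\tfrac{r(x)}{\dr(x)}\big(\tfrac{1-\mu(x)r(x)}{1-\mu(x)}\big)$.

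The main obstacle is purely bookkeeping around an indeterminate $0 \cdot \infty$ form: the denominator $\rho'_n(x)-\rho_n(x)$ tends to zero and $N_n(x)$ diverges, so the product is finite only because these two effects cancel. The entire point of the factorization above is to make that cancellation explicit by pairing the diverging $1/(\rho'_n(x)-\rho_n(x))$ against the empirical increment $\mu'_n(x)-\mu_n(x)$, which vanishes at the matching rate $1/n$, rather than against $N_n(x)$ directly. The positivity hypothesis $\dr(x) > 0$ is precisely what guarantees the last factor has a finite, nonzero limit $1/\dr(x)$, so that no indeterminacy survives; the formula should also be read with $\mu(x) \neq 1$, as the denominator $1-\mu(x)$ makes implicit.
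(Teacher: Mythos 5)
Your proposal is correct and follows essentially the same route as the paper: both substitute the closed form \eqnref{discrete_pseudo_count}, use the exact identity for the empirical increment (your $\mu'_n(x)-\mu_n(x) = (1-\mu_n(x))/(n+1)$ is just a restatement of the paper's Lemma \ref{lem:empirical_rate_of_change}), factor the ratio so that $\rho'_n(x)-\rho_n(x)$ is paired against $\mu'_n(x)-\mu_n(x)$, and pass to the limit factor by factor using Assumption \ref{assum:limit_assumptions} together with the common limit $\rho_n(x), \rho'_n(x) \to r(x)\mu(x)$. The only difference is cosmetic bookkeeping (your $\tfrac{n+1}{n}$ and $1-\mu_n(x)$ versus the paper's $1-\mu'_n(x)$), so there is nothing to change.
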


The model's relative rate of change, whose convergence to $\dr(x)$ we require, plays an essential role in
the ratio of pseudo- to empirical counts. To see this,
consider a sequence $(x_n : n \in \bN)$ generated i.i.d. from a distribution $\mu$ over a finite state space, and a density model defined from a sequence of nonincreasing step-sizes $(\alpha_n : n \in \bN)$:
\begin{equation*}
\rho_n(x) = (1 - \alpha_n) \rho_{n-1}(x) + \alpha_n \indic{x_n = x},
\end{equation*}
with initial condition $\rho_0(x) = |\cX|^{-1}$. For $\alpha_n = n^{-1}$, this density
model is the empirical distribution. For $\alpha_n = n^{-2/3}$, we may appeal to well-known results from stochastic approximation \citep[e.g.][]{bertsekas96neurodynamic} and find that almost surely 
\begin{equation*}
\lim_{n \to \infty} \rho_n(x) = \mu(x) \qquad \text{but} \qquad \lim_{n \to \infty} \frac{\rho'_n(x) - \rho_n(x)}{\mu'_n(x) - \mu_n(x)} = \infty .
\end{equation*}
Since $\mu'_n(x) - \mu_n(x) = n^{-1} (1 - \mu'_n(x))$, we may think of Assumption 1(b) as also requiring $\rho$ to converge at a rate of $\Theta(1/n)$ for a comparison with the empirical count $N_n$ to be meaningful. Note, however, that a density model that does not satisfy Assumption 1(b) may still yield useful (but incommensurable) pseudo-counts.

\begin{cor}\label{cor:atomic_models}
Let $\phi(x) > 0$ with $\sum_{x \in \cX} \phi(x) < \infty$ and consider the count-based estimator
\begin{equation*}
\rho_n(x) = \frac{N_n(x) + \phi(x)}{n + \sum_{x' \in \cX} \phi(x')} .
\end{equation*}
If $\hN_n$ is the pseudo-count corresponding to $\rho_n$ then $\hN_n(x) / N_n(x) \to 1$ for all $x$
with $\mu(x) > 0$.
\end{cor}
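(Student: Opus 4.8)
The plan is to bypass the general machinery of Theorem \ref{thm:ratio_of_counts} and instead compute the pseudo-count in closed form, since this count-based estimator is simple enough to admit an exact expression. Write $\Phi := \sum_{x' \in \cX} \phi(x')$, so that $\rho_n(x) = (N_n(x) + \phi(x)) / (n + \Phi)$. The first step is to identify the recoding probability: observing one further copy of $x$ increments both $N_n(x)$ and $n$ by one while leaving $\Phi$ unchanged, so $\rho'_n(x) = (N_n(x) + \phi(x) + 1)/(n + \Phi + 1)$.

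Next I would substitute these into the closed form \eqnref{discrete_pseudo_count}. Abbreviating $a := N_n(x) + \phi(x)$ and $b := n + \Phi$, we have $\rho_n(x) = a/b$ and $\rho'_n(x) = (a+1)/(b+1)$, whence $\rho'_n(x) - \rho_n(x) = (b - a)/[b(b+1)]$ and $1 - \rho'_n(x) = (b-a)/(b+1)$. The factors $(b-a)$ and $(b+1)$ cancel, and the computation collapses to $\hn = b$ and $\hN_n(x) = \hn\, \rho_n(x) = a$; that is, $\hN_n(x) = N_n(x) + \phi(x)$ exactly, for every $n$ and $x$. Along the way one checks that the system is consistent and learning-positive, since $a \le b$ always (as $N_n(x) \le n$ and $\phi(x) \le \Phi$), with strict inequality in the nondegenerate case, so $\hN_n(x)$ is finite and nonnegative.

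With this exact identity in hand the conclusion is immediate: $\hN_n(x)/N_n(x) = 1 + \phi(x)/N_n(x)$. The final step is to observe that for any $x$ with $\mu(x) > 0$ the empirical count diverges, since $N_n(x) = n\, \mu_n(x)$ and $\mu_n(x) \to \mu(x) > 0$ force $N_n(x) \to \infty$; hence $\phi(x)/N_n(x) \to 0$ and the ratio tends to $1$.

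I expect essentially no real obstacle: the only care needed is the bookkeeping of the recoding increment, the elementary cancellation, and the remark that $N_n(x) \to \infty$ precisely when $\mu(x) > 0$. As a sanity check the result also follows from Theorem \ref{thm:ratio_of_counts}, since one verifies $r(x) = \lim_n \rho_n(x)/\mu_n(x) = 1$ and, using $\mu'_n(x) - \mu_n(x) = (n - N_n(x))/[n(n+1)]$, that $\dr(x) = 1$, so the displayed limit reduces to $\tfrac{r(x)}{\dr(x)}\cdot\tfrac{1 - \mu(x) r(x)}{1 - \mu(x)} = 1$ whenever $\mu(x) < 1$. I would prefer the direct route, however, because it is self-contained and also covers the boundary case $\mu(x) = 1$, where the Theorem \ref{thm:ratio_of_counts} formula is indeterminate but $1 + \phi(x)/N_n(x) \to 1$ still holds.
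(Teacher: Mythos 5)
Your proof is correct, and it takes a genuinely different route from the paper's. You compute the pseudo-count in closed form and obtain the exact identity $\hN_n(x) = N_n(x) + \phi(x)$ with $\hn = n + \sum_{x'} \phi(x')$ (your algebra checks out: with $a = N_n(x)+\phi(x)$, $b = n + \Phi$, both $1-\rho'_n(x)$ and $\rho'_n(x)-\rho_n(x)$ carry the factor $b-a$, which cancels), after which the claim reduces to $N_n(x) \to \infty$ whenever $\mu(x) > 0$. The paper instead proves a strictly more general statement (Lemma \ref{lem:atomic_models_extended} in the appendix), allowing a sequence-dependent smoothing term $\phi(x, \xn)$ subject to a sublinear-growth condition and a vanishing rate-of-change condition; it verifies that Assumption \ref{assum:limit_assumptions} holds with $r(x) = \dr(x) = 1$ and then invokes Theorem \ref{thm:ratio_of_counts} to conclude. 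What each approach buys: the paper's route covers adaptive tabular estimators whose pseudo-count mass changes with the data, and it exercises the general asymptotic machinery; your route is self-contained and elementary, makes transparent that the Dirichlet-style estimator's pseudo-count is \emph{exactly} the familiar smoothed count (confirming the remark after Equation \eqnref{discrete_pseudo_count} in Section \ref{sec:prediction_to_counts}), and, as you correctly observe, it also handles the boundary case $\mu(x) = 1$, where the limit formula of Theorem \ref{thm:ratio_of_counts} is indeterminate ($0/0$) and the paper's argument does not literally apply. The one thing your argument does not give is the extension to sequence-dependent $\phi$, which is precisely what the paper's Lemma \ref{lem:atomic_models_extended} is designed for.
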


\section{Empirical Evaluation}

In this section we demonstrate the use of pseudo-counts to guide exploration. We return to the
Arcade Learning Environment, now using the CTS model to generate an exploration bonus.

\subsection{Exploration in Hard Atari 2600 Games}

From 60 games available through the Arcade Learning Environment we selected five ``hard'' games,
in the sense that an $\epsilon$-greedy policy is inefficient at exploring them. We used a bonus
of the form
\begin{equation}\label{eqn:square_root_bonus}
R^+_n(x,a) := \beta (\hN_n(x) + 0.01)^{-1/2},  
\end{equation}
where $\beta = 0.05$ was selected from a coarse parameter sweep. 
We also compared our method to the optimistic initialization trick proposed by
\citet{machado14domainindependent}.
We trained our agents' Q-functions
with Double DQN \citep{vanhasselt16deep}, with one important modification: we mixed the 
Double Q-Learning target
with the Monte Carlo return. This modification led to improved results both with
and without exploration bonuses (details in the appendix).

\begin{figure*}
\center{
\includegraphics[width=\textwidth,trim={0.25cm 0.25cm 0.25cm 0.25cm},clip]{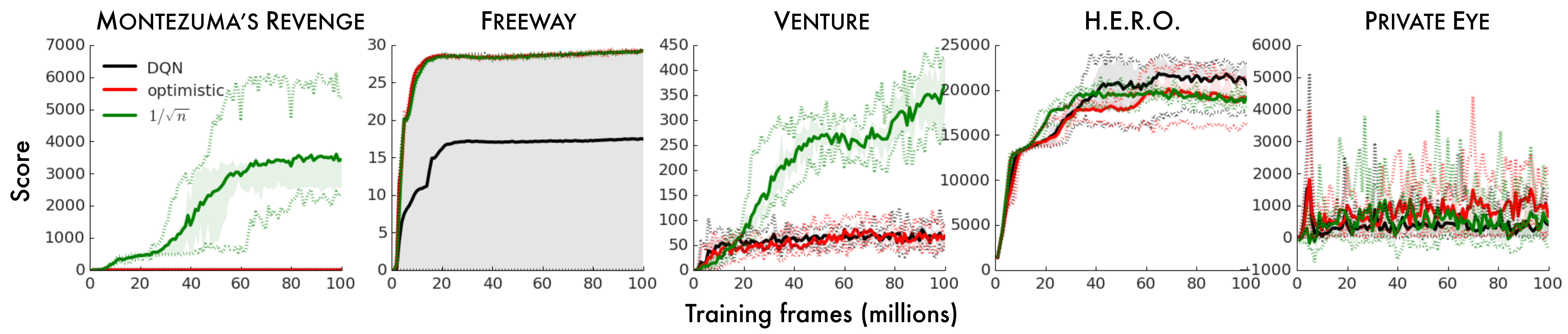}
\vspace{-1em}
}
\caption{Average training score with and without exploration bonus or optimistic initialization in 5 Atari 2600 games. Shaded areas denote inter-quartile range, dotted lines show min/max scores. 
\label{fig:5_hard}} 
\end{figure*}

Figure \ref{fig:5_hard} depicts the result of our experiment, averaged across 5 trials.
Although optimistic initialization helps in \gamename{Freeway}, it otherwise yields performance
similar to DQN. By contrast, the count-based exploration bonus enables us to make quick progress
on a number of games, most dramatically in \gamename{Montezuma's Revenge} and \gamename{Venture}. 

\gamename{Montezuma's Revenge} is perhaps the hardest Atari 2600 game available
through the ALE. The game is infamous for its hostile, unforgiving
environment: the agent must navigate a number of different rooms, each filled with traps.
Due to its sparse reward function, most published agents achieve an average score close to zero and completely fail to explore most of the 24 rooms that constitute the first level (Figure \ref{fig:montezuma_revenge_map_at_50_main}, top).
By contrast, within 50 million frames our agent learns a policy which consistently navigates
through 15 rooms (Figure \ref{fig:montezuma_revenge_map_at_50_main}, bottom). Our agent also achieves a score higher than anything previously reported, with one run consistently achieving 6600 points by 100 million frames (half the training samples used by \citet{mnih15human}). We believe the success of our method in this game is a strong indicator of the 
usefulness of pseudo-counts for exploration.\footnote{A video of our agent playing is available at \url{https://youtu.be/0yI2wJ6F8r0}.} 

\begin{figure*}
\center{
\includegraphics[width=5.5in]{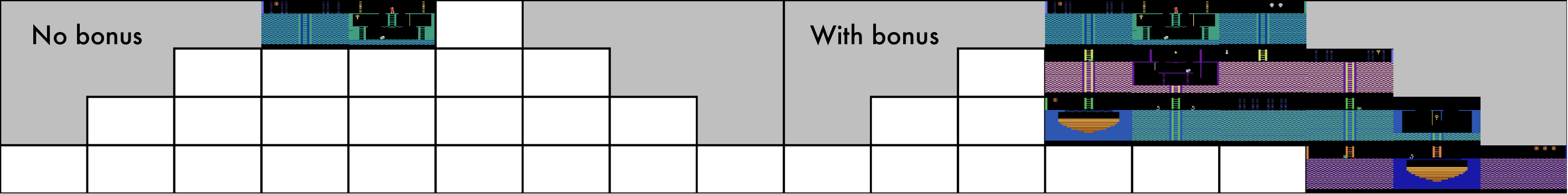}
\vspace{-1em}
}
\caption{``Known world'' of a DQN agent trained for 50 million frames with (\textbf{right}) and without (\textbf{left}) count-based exploration bonuses, in \gamename{Montezuma's Revenge}.\label{fig:montezuma_revenge_map_at_50_main}}
\end{figure*}

\subsection{Exploration for Actor-Critic Methods}

We next used our exploration bonuses in conjunction with the A3C 
(Asynchronous Advantage Actor-Critic) algorithm of \citet{mnih16asynchronous}. One appeal of
actor-critic methods is their explicit separation of policy and Q-function parameters, which leads to a richer behaviour space. This very separation, however, often leads to deficient exploration: to produce
any sensible results, the A3C policy must be regularized with an entropy cost.
We trained A3C on 60 Atari 2600 games, with and without the exploration bonus \eqnref{square_root_bonus}. We refer to our augmented algorithm as A3C+. Full details and additional results may be found in the appendix. 

We found that A3C fails to learn in \textbf{15} games, in the sense
that the agent does not achieve a score 50\% better than random. In comparison, there
are only \textbf{10} games for which A3C+ fails to improve on the random agent; of these, 
\textbf{8} are games where DQN fails in the same sense. We normalized the two algorithms' 
scores so that 0 and 1 are respectively the minimum and maximum of the random agent's and A3C's
end-of-training score on a particular game. Figure \ref{fig:a3c_agg} depicts the in-training median score for A3C and A3C+, along with 1st and 3rd quartile intervals. Not only does A3C+ achieve
slightly superior median performance, but it also significantly outperforms A3C on at least a
quarter of the games.  This is particularly important given the large proportion of Atari 2600
games for which an $\epsilon$-greedy policy is sufficient for exploration. 

\begin{figure*}
\center{
\includegraphics[height=1.5in]{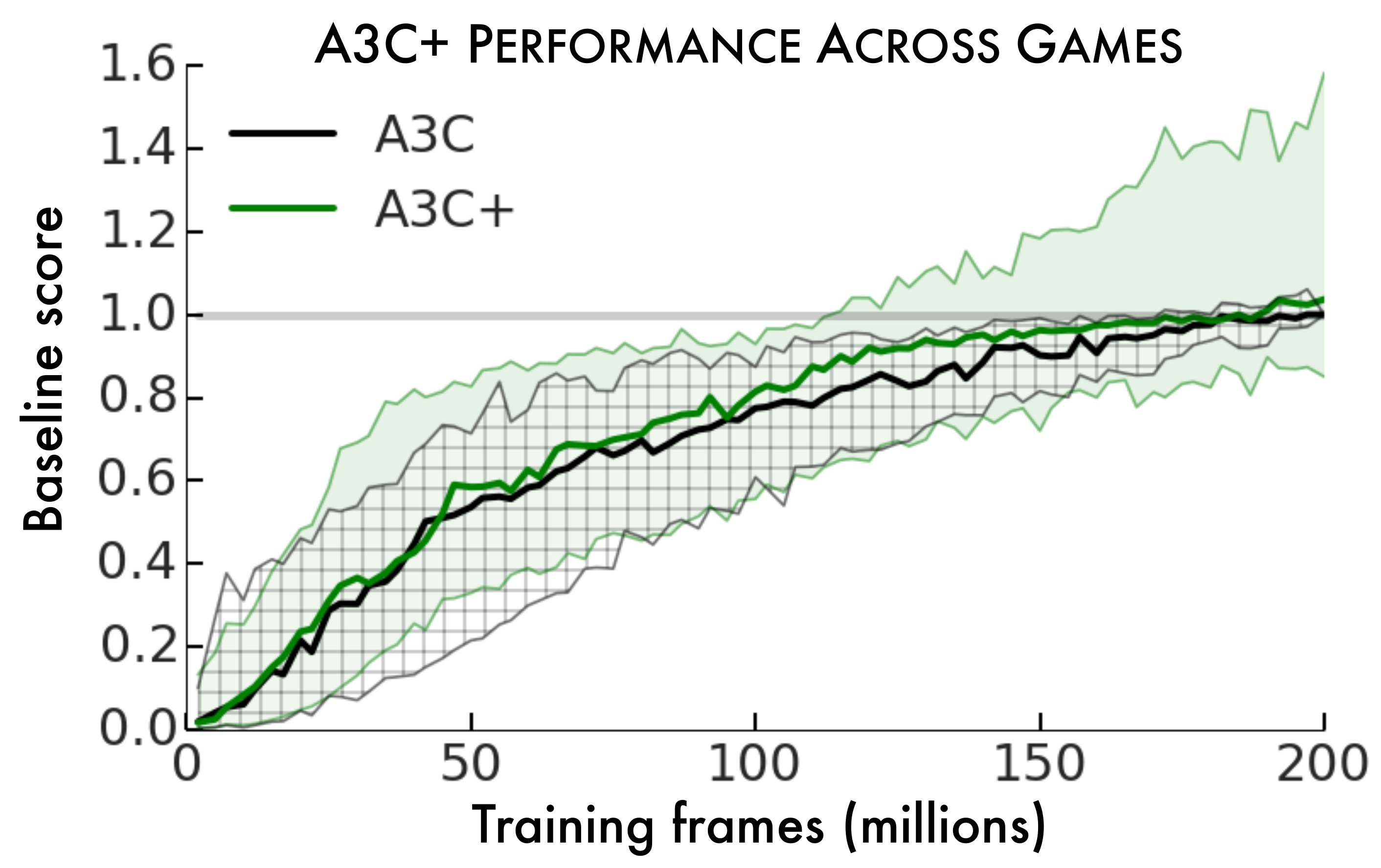}
}
\caption{Median and interquartile performance across 60 Atari 2600 games for A3C and A3C+.
\label{fig:a3c_agg}} 
\end{figure*}

\section{Related Work}

Information-theoretic quantities have been repeatedly used to describe intrinsically motivated behaviour. Closely related to prediction gain is \citet{schmidhuber91possibility}'s notion of compression progress, which equates novelty with an agent's improvement in its ability to compress its past.
More recently, \citet{lopes12exploration} showed the relationship between time-averaged prediction gain and visit counts in a tabular setting; their result is a special case of Theorem \ref{thm:ratio_of_counts}. \citet{orseau13universal} demonstrated that maximizing the sum of future information gains does lead to optimal behaviour, even though maximizing immediate information gain does not (Section \ref{sec:connection_to_intrinsic_motivation}). Finally, there may be a connection between sequential normalized maximum likelihood estimators and our pseudo-count derivation \citep[see e.g.][]{ollivier15laplace}.

Intrinsic motivation has also been studied in reinforcement learning proper, in particular in the context of discovering skills \citep{singh04intrinsically,barto13intrinsic}. Recently, \citet{stadie15incentivizing} used a squared prediction error bonus for exploring in Atari 2600 games. Closest to our work is \citet{houthooft16curiosity}'s variational approach to intrinsic motivation, which is equivalent to a second order Taylor approximation to prediction gain.
\citet{mohamed15variational} also considered a variational approach to the different problem of maximizing an agent's ability to influence its environment.

Aside for \citeauthor{orseau13universal}'s above-cited work, it is only recently that theoretical guarantees for exploration have emerged for non-tabular, stateful settings. We note \citet{pazis16efficient}'s PAC-MDP result for metric spaces and \citet{leike16thompson}'s asymptotic analysis of Thompson sampling in general environments.

\section{Future Directions}\label{sec:conclusion}

The last few years have seen tremendous advances in learning representations for 
reinforcement learning. Surprisingly, these advances have yet to carry over to 
the problem of exploration. In this paper, we reconciled counts, the fundamental unit of
uncertainty, with prediction-based heuristics and intrinsic motivation. 
Combining our work with more ideas from deep learning and better density models seems a plausible avenue for quick progress in practical, efficient exploration.
We now conclude by outlining a few research directions we believe are promising.

\highlight{Induced metric.} We did not address the question of \emph{where} the generalization comes from. Clearly, the choice of density model induces a particular metric over the state space. A better understanding of this metric should allow us to tailor the density model to the problem of exploration. 

\highlight{Compatible value function.} There may be a mismatch in the learning rates of the density model and the value function: DQN learns much more slowly than our
CTS model. As such, it should be beneficial to design value functions
compatible with density models (or vice-versa). 

\highlight{The continuous case.} Although we focused here on countable state spaces, we can as easily define a pseudo-count in terms of probability density functions. 
At present it is unclear whether this provides us with the right notion of counts for continuous spaces. 

\subsubsection*{Acknowledgments}

The authors would like to thank Laurent Orseau, Alex Graves, Joel Veness, Charles Blundell, Shakir Mohamed, Ivo Danihelka, Ian Osband, Matt Hoffman, Greg Wayne, Will Dabney, and A\"aron van den Oord for their excellent feedback early and late in the writing, and Pierre-Yves Oudeyer and Yann Ollivier for pointing out additional connections to the literature. 

\bibliographystyle{apalike}
\begin{small}
\bibliography{dbim}
\end{small}

\noappendix{
\appendix

\section{The Connection to Intrinsic Motivation}

The following provides an identity connecting information gain and prediction gain.
\begin{lem}\label{lem:identity_ipd_ig}
Consider a mixture model $\xi$ over $\cM$ with prediction gain $\PG_n$ and information gain $\IG_n$, a fixed $x \in \cX$, and let $w'_n(x) := w_n(\rho, x)$ be the posterior of $\xi$ over $\cM$ after observing $x$. Let $w''_n(x) := w'_n(\rho, x)$ be the same posterior after observing $x$ a second time, and let $\PG^\rho_n(x)$ denote the prediction gain of $\rho \in \cM$. Then
\begin{equation*}
\PG_n(x) = \KL(w'_n \cdbar w_n) + \KL(w'_n \cdbar w''_n) = \IG_n(x) + \KL(w'_n \cdbar w''_n) + \expects_{w'_n} \left [ \PG^{\rho}_{n}(x) \right ].
\end{equation*}
In particular, if $\cM$ is a class of non-adaptive models in the sense that $\rho_n(x) = \rho(x)$ for all $\xn$, then
\begin{equation*}
\PG_n(x) = \KL(w'_n \cdbar w_n) + \KL(w'_n \cdbar w''_n) = \IG_n(x) + \KL(w'_n \cdbar w''_n) . 
\end{equation*}
\end{lem}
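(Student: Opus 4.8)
The plan is to reduce everything to two applications of the posterior update \eqnref{posterior_update} followed by elementary manipulation of logarithms. First I would record the two likelihood ratios this update supplies. From $w'_n(\rho) = w_n(\rho)\rho_n(x)/\xi_n(x)$ we get $w'_n(\rho)/w_n(\rho) = \rho_n(x)/\xi_n(x)$. Applying the same update once more, now based at $w'_n$ and normalized by the recoded mixture probability, gives $w''_n(\rho)/w'_n(\rho) = \rho'_n(x)/\xi'_n(x)$. Here I would first establish the key auxiliary fact that the recoding probability of the mixture is itself a posterior-weighted recoding, $\xi'_n(x) = \expects_{w'_n}[\rho'_n(x)]$; this follows by applying the mixture definition to the extended sequence $\xn x$, whose posterior weight is exactly $w_{n+1} = w'_n$.

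Second, I would substitute these ratios into the two divergences. Because each ratio is a quotient of a density-model probability and a mixture probability, the logarithm splits cleanly, and the mixture terms (being independent of $\rho$) come out of the $w'_n$-expectation. This yields $\KL(w'_n \cdbar w_n) = \expects_{w'_n}[\log \rho_n(x)] - \log \xi_n(x)$ and $\KL(w'_n \cdbar w''_n) = \log \xi'_n(x) - \expects_{w'_n}[\log \rho'_n(x)]$. The first of these is, by definition, exactly $\IG_n(x)$, so I would identify it as such immediately.

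Third, I would add the two divergences and regroup. The two mixture log-terms combine into $\log \xi'_n(x) - \log \xi_n(x) = \PG_n(x)$, while the two posterior-averaged density log-terms combine into $-\expects_{w'_n}[\log \rho'_n(x) - \log \rho_n(x)] = -\expects_{w'_n}[\PG^\rho_n(x)]$. Rearranging produces the claimed identity relating $\PG_n$, $\IG_n$, $\KL(w'_n \cdbar w''_n)$, and the posterior-averaged individual prediction gain. For the non-adaptive specialization I would simply note that $\rho_n(x) = \rho(x)$ for all histories forces $\rho'_n(x) = \rho(x) = \rho_n(x)$, hence $\PG^\rho_n(x) = 0$ identically, so the expectation term vanishes and the identity collapses to $\PG_n(x) = \IG_n(x) + \KL(w'_n \cdbar w''_n)$.

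The main obstacle I expect is the bookkeeping in the first step: correctly pinning down the recoding probability $\xi'_n(x)$ of the mixture and the base and normalizer of the second posterior update. The subtlety is that the individual-model prediction gain appears under the expectation $\expects_{w'_n}[\cdot]$ with respect to the once-updated posterior $w'_n$, not the original $w_n$; getting this measure right is what makes the $\PG^\rho_n$ term land correctly, and it is the only place where adaptivity of the models in $\cM$ genuinely enters.
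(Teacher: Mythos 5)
Your proposal is correct and takes essentially the same approach as the paper: both proofs rest on the two posterior-update identities $w'_n(\rho)/w_n(\rho) = \rho_n(x)/\xi_n(x)$ and $w''_n(\rho)/w'_n(\rho) = \rho'_n(x)/\xi'_n(x)$, split the resulting logarithms under the expectation $\expects_{w'_n}$, and kill the $\expects_{w'_n}\left[\PG^\rho_n(x)\right]$ term in the non-adaptive case since $\rho'_n(x) = \rho_n(x)$ there. The only difference is bookkeeping order: the paper expands $\PG_n(x) = \expects_{w'_n}\left[\log(\xi'_n(x)/\xi_n(x))\right]$ into all three terms at once, whereas you evaluate $\KL(w'_n \cdbar w_n)$ and $\KL(w'_n \cdbar w''_n)$ separately and then add and regroup---the same algebra, rearranged.
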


A model which is non-adaptive is also learning-positive in the sense of Definition \ref{defn:learning_positive}. Many common mixture models, for example Dirichlet-multinomial estimators, are mixtures over non-adaptive models.

\begin{proof}
We rewrite the posterior update rule \eqnref{posterior_update} to show that for any $\rho \in \cM$ and any $x \in \cX$,
\begin{equation*}
\xi_n(x) = \frac{\rho_n(x) w_n(\rho)}{w_n(\rho, x)} .
\end{equation*}
Write $\expects_{w'_n} := \expect_{\rho \sim w'_n(\cdot)}$. Now
\begin{align*}
PG_n(x) = \log \frac{\xi'_n(x)}{\xi_n(x)} &= \expects_{w'_n} \left [ \log \frac{\xi'_n(x)}{\xi_n(x)} \right ] \\
&= \expects_{w'_n} \left [ \log \frac{w'_n(\rho)}{w''_n(\rho)} \frac{w'_n(\rho)}{w_n(\rho)} \frac{\rho'_n(x)}{\rho_n(x)} \right ] \\
&= \expects_{w'_n} \left [ \log \frac{w'_n(\rho)}{w_n(\rho)} \right ] + \expects_{w'_n} \left [ \log \frac{w'_n(\rho)}{w''_n(\rho)} \right ] + \expects_{w'_n} \left [ \log \frac{\rho'_n(x)}{\rho_n(x)} \right ] \\
& = \IG_n(x) + \KL(w'_n \cdbar w''_n) + \expects_{w'_n} \left [ \PG^{\rho}_{n}(x) \right ] . \qedhere
\end{align*}
The second statement follows immediately.
\end{proof}

\begin{lem}\label{lem:logarithmic_inequality}
The functions $f(x) := e^x - 1 - x$ and $g(x) := e^x - 1 - x^2$ are nonnegative on $x \in [0, \infty)$.
\end{lem}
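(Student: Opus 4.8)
The plan is to treat each function separately by elementary calculus, in both cases combining the value at the origin with the sign of the derivative on $[0,\infty)$; these are exactly the two inequalities $e^{\PG} - 1 - \PG \ge 0$ and $e^{\PG} - 1 - \PG^2 \ge 0$ that underlie the two bounds of Theorem~\ref{thm:main_result}, evaluated at the nonnegative argument $x = \PG_n(x)$.

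For $f(x) = e^x - 1 - x$, first I would observe that $f(0) = 0$ and $f'(x) = e^x - 1$. Since $e^x \ge 1$ for all $x \ge 0$, the derivative is nonnegative, so $f$ is nondecreasing on $[0,\infty)$ and therefore $f(x) \ge f(0) = 0$. (Equivalently, $f(x) = \sum_{k \ge 2} x^k / k!$ is a sum of nonnegative terms for $x \ge 0$; I would note this as a quick alternative.)

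For $g(x) = e^x - 1 - x^2$ the same template applies, but the derivative test goes one layer deeper, and this is where the only real work lies. Again $g(0) = 0$ and $g'(x) = e^x - 2x$; however $g'$ is not monotone, so checking its sign at the endpoint alone is not enough. Instead I would locate the global minimum of $g'$ on $[0,\infty)$: since $g''(x) = e^x - 2$ vanishes only at $x = \ln 2$ and changes sign from negative to positive there, $g'$ attains its minimum at $x = \ln 2$, with value $g'(\ln 2) = 2 - 2\ln 2$. The key arithmetic fact is $\ln 2 < 1$ (equivalently $e > 2$), which gives $g'(\ln 2) > 0$, hence $g'(x) > 0$ throughout $[0,\infty)$. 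It follows that $g$ is strictly increasing, so $g(x) \ge g(0) = 0$.

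The main obstacle, such as it is, is confined to the second function: because $g'(x) = e^x - 2x$ dips before rising, one cannot conclude nonnegativity by inspecting $g'$ at a single point, and the termwise power-series argument that settles $f$ fails for $g$ (the expansion of $g$ contains the negative term $-x^2/2$). Pinning down the interior minimizer $x = \ln 2$ and verifying $2 - 2\ln 2 > 0$ is therefore the one step that requires care; everything else is bookkeeping.
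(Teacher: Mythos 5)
Your proof is correct and follows essentially the same route as the paper's: both arguments reduce each claim to the sign of the first derivative together with the value at the origin, and in particular both hinge on showing $g'(x) = e^x - 2x \ge 0$ on $[0,\infty)$. The only divergence is in that sub-step: the paper splits at $x = 1$ and uses the series bound $e^x \ge 1 + x$ on $[0,1]$ (giving $e^x - 2x \ge 1 - x \ge 0$), whereas you locate the exact minimum of $g'$ at $x = \ln 2$ and verify $2 - 2\ln 2 > 0$ --- a slightly heavier but equally valid calculation.
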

\begin{proof}
The statement regarding $f(x)$ follows directly from the Taylor expansion for $e^x$. Now, the first derivative of $g(x)$ is $e^x - 2x$. It is clearly positive for $x \ge 1$. For $x \in [0, 1]$,
\begin{equation*}
e^x - 2x = \sum_{i=0}^\infty \frac{x^i}{i!} - 2x \ge 1 - x \ge 0 . \\
\end{equation*}
Since $g(0) = 0$, the second result follows.
\end{proof}
\begin{proof}[Proof (Theorem \ref{thm:main_result})]
The inequality $\IG_n(x) \le \PG_n(x)$ follows directly from Lemma \ref{lem:identity_ipd_ig}, 
the nonnegativity of the Kullback-Leibler divergence, and the fact that all models in $\cM$ are learning-positive. For the inequality $\PG_n(x) \le \hN_n(x)^{-1}$,
we write 
\begin{align*}
\hN_n(x)^{-1} &= (1 - \xi'_n(x))^{-1} \frac{\xi'_n(x) - \xi_n(x)}{\xi_n(x)} \\
&= (1 - \xi'_n(x))^{-1} \left ( \frac{\xi'_n(x)}{\xi_n(x)} - 1 \right ) \\
&\overset{(a)}{=} (1 - \xi'_n(x))^{-1} \left ( e^{\PG_n(x)} - 1 \right ) \\
&\overset{(b)}{\ge} e^{\PG_n(x)} - 1 \\
&\overset{(c)}{\ge} \PG_n(x),
\end{align*}
where (a) follows by definition of prediction gain, (b) from $\xi'_n(x) \in [0, 1)$, and (c) from Lemma \ref{lem:logarithmic_inequality}. Using the second part of Lemma \ref{lem:logarithmic_inequality} in (c) yields the inequality $\hN_n(x)^{-1/2} \ge \PG_n(x)$.
\end{proof}

\section{Asymptotic Analysis}

We begin with a simple lemma which will prove useful throughout.
\begin{lem}\label{lem:empirical_rate_of_change}
The rate of change of the empirical distribution, $\mu'_n(x) - \mu_n(x)$, is such that
\begin{equation*}
n \big ( \mu'_n(x) - \mu_n(x) \big ) = 1 - \mu'_n(x) . 
\end{equation*}
\end{lem}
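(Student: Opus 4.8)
The plan is to unfold the two definitions involved and reduce the claim to elementary algebra on counts. First I would recall that the empirical distribution is $\mu_n(x) = N_n(x)/n$, and that the recoding probability is by definition $\mu'_n(x) = \mu(x \csemi \xn x)$, i.e. the empirical probability of $x$ after one additional occurrence of $x$ has been appended to the sequence. Appending $x$ increments both the count of $x$ and the length of the sequence by one, so $\mu'_n(x) = (N_n(x) + 1)/(n+1)$. This identification is the only step that requires any care: one must correctly track that the recoding operation modifies the denominator (the total length) as well as the numerator (the count).

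With both quantities written in closed form, I would compute the difference $\mu'_n(x) - \mu_n(x)$ over the common denominator $n(n+1)$. The numerator simplifies to $n(N_n(x)+1) - (n+1)N_n(x) = n - N_n(x)$, so that $\mu'_n(x) - \mu_n(x) = (n - N_n(x))/(n(n+1))$. Multiplying through by $n$ then gives $n\big(\mu'_n(x) - \mu_n(x)\big) = (n - N_n(x))/(n+1)$.

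To close the argument I would separately simplify the right-hand side of the claim: $1 - \mu'_n(x) = 1 - (N_n(x)+1)/(n+1) = (n - N_n(x))/(n+1)$, which is exactly the expression just obtained. Since both sides equal $(n - N_n(x))/(n+1)$, the identity follows. There is essentially no obstacle here beyond bookkeeping; the sole place one could slip is the form of $\mu'_n(x)$, so I would make the denominator increment explicit before doing any arithmetic, and the remainder is a one-line cancellation.
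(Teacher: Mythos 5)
Your proof is correct and follows essentially the same route as the paper's: both unfold $\mu_n(x) = N_n(x)/n$ and $\mu'_n(x) = (N_n(x)+1)/(n+1)$ and reduce the identity to elementary algebra, the only cosmetic difference being that you simplify both sides to the common expression $(n - N_n(x))/(n+1)$ while the paper rewrites the left-hand side directly into $1 - \mu'_n(x)$ in a single chain.
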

\begin{proof}
We expand the definition of $\mu_n$ and $\mu'_n$:
\begin{align*}
n \big (\mu'_n(x) - \mu_n(x) \big ) &= n \left [ \frac{N_n(x) + 1}{n + 1} - \frac{N_n(x)}{n} \right ] \\
&= \left [ \frac{n}{n+1} \big ( N_n(x) + 1 \big ) - N_n(x) \right ] \\
&= \left [ 1 - \frac{N_n(x) + 1}{n + 1} \right ] \\
&= 1 - \mu'_n(x) . 
\end{align*}
\end{proof}

Using this lemma, we derive an asymptotic relationship between $N_n$ and $\hN_n$. 

\begin{proof}[Proof (Theorem \ref{thm:ratio_of_counts})]
We expand the definition of $\hN_n(x)$ and $N_n(x)$: 
\begin{align*}
\frac{\hN_n(x)}{N_n(x)} &= \frac{\rho_n(x) (1 - \rho'_n(x))}{N_n(x) (\rho'_n(x) - \rho_n(x))} \\
&= \frac{\rho_n(x) (1 - \rho'_n(x))}{n \mu_n(x) (\rho'_n(x) - \rho_n(x))} \\
&= \frac{\rho_n(x) (\mu'_n(x) - \mu_n(x))}{\mu_n(x) (\rho'_n(x) - \rho_n(x))} \; \frac{1-\rho'_n(x)}{n(\mu'_n(x) - \mu_n(x))} \\
&= \frac{\rho_n(x)}{\mu_n(x)} \; \frac{\mu'_n(x) - \mu_n(x)}{\rho'_n(x) - \rho_n(x)} \; \frac{1-\rho'_n(x)}{1 - \mu'_n(x)} , 
\end{align*}
with the last line following from Lemma \ref{lem:empirical_rate_of_change}. Under Assumption \ref{assum:limit_assumptions}, all terms of the right-hand side converge as $n \to \infty$. Taking the limit on both sides,
\begin{align*}
\lim_{n \to \infty} \frac{\hN_n(x)}{N_n(x)} &\overset{(a)}{=} \frac{r(x)}{\dr(x)} \lim_{n \to \infty} \frac{1-\rho'_n(x)}{1 - \mu'_n(x)} \\ 
&\overset{(b)}{=} \frac{r(x)}{\dr(x)} \; \frac{1 - \mu(x) r(x)}{1 - \mu(x)} ,
\end{align*}
where (a) is justified by the existence of the relevant limits and $\dr(x) > 0$, and (b) follows 
from writing $\rho'_n(x)$ as $\mu_n(x) \rho'_n(x) / \mu_n(x)$, where all limits involved
exist. 
\end{proof}

\subsection{Directed Graphical Models}

We say that $\cX$ is a \emph{factored} state space if it is the Cartesian product of $k$ subspaces, i.e. $\cX := \cX_1 \times \dots \times \cX_k$. This factored structure allows us to construct approximate density models over $\cX$, for example by modelling the joint density as a product of marginals.
We write the $i^{th}$ factor of a state $x \in \cX$ as $x^i$, and write the sequence of the $i^{th}$ factor across $\xn$ as $x^i_{1:n}$. 

We will show that directed graphical models \citep{wainwright08graphical} satisfy Assumption 
\ref{assum:limit_assumptions}.
A directed graphical model describes a probability distribution over a factored state space. To the $i^{th}$ factor $x^i$ is associated a parent set $\pi(i) \subseteq \left \{ 1, \dots, i - 1 \right \}$. Let $x^{\pi(i)}$ denote the value of the factors in the parent set. The $i^{th}$ factor model
is $\rho^i_n(x^i \csemi x^{\pi(i)}) := \rho^i(x^i \csemi \xn, x^{\pi(i)})$, with the understanding 
that $\rho^i$ is allowed to make a different prediction for each value of $x^{\pi(i)}$. The state $x$ 
is assigned the joint probability 
\begin{equation*}
\rho_{\textsc{gm}}(x \csemi \xn) := \prod_{i=1}^k \rho^i_n(x^i \csemi x^{\pi(i)}) .
\end{equation*}
Common choices for $\rho^i_n$ include the conditional empirical distribution and the Dirichlet estimator. 
\begin{prop}\label{prop:directed_graphical_models}
Suppose that each factor model $\rho^i_n$ converges to the conditional probability distribution
$\mu(x^i \cbar x^{\pi(i)})$ and that for each $x^i$ with $\mu(x^i \cbar x^{\pi(i)})$,
\begin{equation*}
\lim_{n \to \infty} \frac{\rho^i(x^i \csemi \xn x, x^{\pi(i)}) - \rho^i(x^i \csemi \xn, x^{\pi(i)})}{\mu(x^i \csemi \xn x, x^{\pi(i)}) - \mu(x^i \csemi \xn, x^{\pi(i)})} = 1 .
\end{equation*}
Then for all $x$ with $\mu(x) > 0$, the density model $\rho_{\textsc{gm}}$ satisfies Assumption \ref{assum:limit_assumptions} with
\begin{equation*}
r(x) = \frac{\prod_{i=1}^k \mu(x^i \cbar x^{\pi(i)})}{\mu(x)} \qquad \text{ and } \qquad \dr(x) = \frac{\sum_{i=1}^k \big (1 - \mu(x^i \cbar x^{\pi(i)}) \big ) \prod_{j \ne i} \mu(x^j \cbar x^{\pi(j)})}{1 - \mu(x)} .
\end{equation*}
\end{prop}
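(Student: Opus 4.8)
The plan is to verify both limits of Assumption~\ref{assum:limit_assumptions} directly for $\rho_{\textsc{gm}}$, reading off $r(x)$ from part~(a) and $\dr(x)$ from part~(b); once these hold, the ratio formula of Theorem~\ref{thm:ratio_of_counts} applies unchanged. Fix $x$ with $\mu(x) > 0$ and abbreviate the current and recoded factor predictions by $a_i := \rho^i(x^i \csemi \xn, x^{\pi(i)})$ and $b_i := \rho^i(x^i \csemi \xn x, x^{\pi(i)})$, so that $\rho_n(x) = \prod_{i=1}^k a_i$ and $\rho'_n(x) = \prod_{i=1}^k b_i$. Part~(a) is then immediate: each factor converges, $a_i \to \mu(x^i \cbar x^{\pi(i)})$, and a finite product is continuous, so $\rho_n(x) \to \prod_i \mu(x^i \cbar x^{\pi(i)})$; dividing by $\mu_n(x) \to \mu(x) > 0$ gives the claimed $r(x)$.

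Part~(b) is the crux, and the difficulty is that the numerator $\rho'_n(x) - \rho_n(x) = \prod_i b_i - \prod_i a_i$ is a difference of products rather than a single increment. I would expand it with the telescoping identity $\prod_i b_i - \prod_i a_i = \sum_{i=1}^k \big(\prod_{j<i} b_j\big)(b_i - a_i)\big(\prod_{j>i} a_j\big)$, which isolates exactly one factor increment $b_i - a_i$ per summand. Because the increments vanish, $b_j$ and $a_j$ share the limit $\mu(x^j \cbar x^{\pi(j)})$, so the surrounding products converge to $\prod_{j \ne i} \mu(x^j \cbar x^{\pi(j)})$. Dividing through by $\mu'_n(x) - \mu_n(x)$ and pivoting on the factor-level empirical increment $m_i := \mu(x^i \csemi \xn x, x^{\pi(i)}) - \mu(x^i \csemi \xn, x^{\pi(i)})$, I write each summand's ratio as $\frac{b_i - a_i}{m_i}\cdot\frac{m_i}{\mu'_n(x) - \mu_n(x)}$, where the first factor tends to $1$ by the learning-rate hypothesis.

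Everything then reduces to the limit of $m_i / (\mu'_n(x) - \mu_n(x))$, a ratio of two empirical increments. For the denominator, Lemma~\ref{lem:empirical_rate_of_change} gives $\mu'_n(x) - \mu_n(x) = (1 - \mu'_n(x))/n \to (1 - \mu(x))/n$. For $m_i$ I would prove a conditional analogue of Lemma~\ref{lem:empirical_rate_of_change}: writing the conditional empirical as the joint count over the parent-context count and appending one copy of $x$ raises both counts by one, giving a closed form for $m_i$ whose numerator tends to $1 - \mu(x^i \cbar x^{\pi(i)})$. Assembling the pieces summand by summand, dividing by $1 - \mu(x)$, and using that each $1 - \mu(x^i \cbar x^{\pi(i)}) \ge 0$, with at least one term strictly positive unless $x$ is reached deterministically, to secure $\dr(x) > 0$, yields the stated expression. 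The main obstacle I anticipate is precisely the bookkeeping in this last step: pinning down the normalization of the factor increment $m_i$ relative to the joint increment $\mu'_n(x) - \mu_n(x)$, and justifying that all the telescoped limits exist simultaneously so that the term-by-term passage to the limit is legitimate.
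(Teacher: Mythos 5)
Your overall architecture matches the paper's proof: part (a) by continuity of a finite product of converging factors, and part (b) by telescoping the difference of products so that each summand isolates a single factor increment, then reducing to ratios of empirical increments via Lemma \ref{lem:empirical_rate_of_change} and a conditional analogue (the paper telescopes with the primed factors on the other side, which is immaterial). The trouble is that the step you defer as ``bookkeeping'' is the crux, and it does not come out the way you assert. Writing the conditional empirical as joint count over parent-context count, appending one copy of $x$ increments both counts by one and gives, by the same algebra as Lemma \ref{lem:empirical_rate_of_change},
\begin{equation*}
m_i \;=\; \mu(x^i \csemi \xn x, x^{\pi(i)}) - \mu(x^i \csemi \xn, x^{\pi(i)}) \;=\; \frac{1 - \mu(x^i \csemi \xn x, x^{\pi(i)})}{N_n(x^{\pi(i)})} ,
\end{equation*}
so the natural clock for factor $i$ is the parent count $N_n(x^{\pi(i)})$, not $n$. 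Consequently
\begin{equation*}
\frac{m_i}{\mu'_n(x) - \mu_n(x)} \;=\; \frac{n}{N_n(x^{\pi(i)})} \cdot \frac{1 - \mu(x^i \csemi \xn x, x^{\pi(i)})}{1 - \mu'_n(x)} \;\longrightarrow\; \frac{1}{\mu(x^{\pi(i)})} \cdot \frac{1 - \mu(x^i \cbar x^{\pi(i)})}{1 - \mu(x)} ,
\end{equation*}
and assembling your summands faithfully yields
\begin{equation*}
\dr(x) \;=\; \frac{\sum_{i=1}^k \mu(x^{\pi(i)})^{-1} \big ( 1 - \mu(x^i \cbar x^{\pi(i)}) \big ) \prod_{j \ne i} \mu(x^j \cbar x^{\pi(j)})}{1 - \mu(x)} ,
\end{equation*}
which carries an extra weight $1/\mu(x^{\pi(i)})$ in every term and agrees with the proposition's stated formula only when each parent context has limiting frequency one (e.g.\ empty parent sets, i.e.\ a fully factorized model).

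You should know that this is exactly where the paper's own proof is loose: it invokes the ``identity'' $n \left ( \mu(x^i \csemi \xn x, x^{\pi(i)}) - \mu(x^i \csemi \xn, x^{\pi(i)}) \right ) = 1 - \mu(x^i \cbar x^{\pi(i)})$, which is the first display above with $N_n(x^{\pi(i)})$ silently replaced by $n$; your closed form for $m_i$ is the correct one. A concrete check makes the discrepancy visible: take $k = 2$, $\pi(2) = \{1\}$, and let each factor model be the exact conditional empirical distribution, so the hypotheses of Proposition \ref{prop:directed_graphical_models} hold with both ratios identically one. Then $\rho_{\textsc{gm}}(x \csemi \xn) = N_n(x)/n = \mu_n(x)$ exactly, hence $\dr(x) = 1$ (and $\hN_n = N_n$, as consistency demands); but for $\mu$ uniform on $\{0,1\}^2$ the proposition's formula gives $\dr(x) = 2/3$, whereas the parent-weighted formula above gives $1$. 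So your instinct about where the difficulty lies was exactly right, but the claim that the computation ``yields the stated expression'' is not: carrying it out shows that one must either correct $\dr(x)$ by the factors $1/\mu(x^{\pi(i)})$, or reinterpret the hypothesis so that factor increments are measured on the global clock $n$ rather than against the conditional empirical distribution.
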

The CTS density model used in our experiments is in fact a particular kind of induced graphical model. The result above thus describes how the pseudo-counts computed in Section \ref{sec:pseudo_counting_salient_events} are asymptotically related to the empirical counts.

\begin{proof}
By hypothesis, $\rho_n^i \to \mu(x^i \cbar x^{\pi(i)})$. Combining this with $\mu_n(x) \to \mu(x) > 0$, 
\begin{align*}
r(x) &= \lim_{n\to\infty} \frac{\rho_\gm(x \csemi \xn)}{\mu_n(x)} \\
&= \lim_{n\to\infty} \frac{\prod_{i=1}^k \rho_n^i (x^i \csemi x^{\pi(i)})}{\mu_n(x)} \\
&= \frac{\prod_{i=1}^k \mu(x^i \cbar x^{\pi(i)})}{\mu(x)} .
\end{align*}
Similarly,
\begin{align*}
\dr(x) &= \lim_{n\to\infty} \frac{\rho'_\gm(x \csemi \xn) - \rho_\gm(x \csemi \xn)}{\mu'_n(x) - \mu_n(x)} \\
&\overset{(a)}{=} \lim_{n\to\infty} \frac{\big ( \rho'_\gm(x \csemi \xn) - \rho_\gm(x \csemi \xn) \big ) n}{1 - \mu'_n(x)} \\
&=  \lim_{n\to\infty} \frac{\big ( \rho'_\gm(x \csemi \xn) - \rho_\gm(x \csemi \xn) \big) n}{1 - \mu(x)},
\end{align*}
where in (a) we used the identity $n (\mu'_n(x) - \mu_n(x)) = 1 - \mu'_n(x)$ derived in the proof
of Theorem \ref{thm:ratio_of_counts}. Now 
\begin{align*}
\dr(x) &= (1 - \mu(x))^{-1} \lim_{n \to \infty} \big ( \rho'_\gm(x \csemi \xn) - \rho_\gm(x \csemi \xn) \big) n \\
&= (1 - \mu(x))^{-1} \lim_{n \to \infty} \big ( \prod_{i=1}^k \rho^i(x^i \csemi \xn x, x^{\pi(i)}) - \prod_{i=1}^k \rho^i(x^i \csemi \xn, x^{\pi(i)}) \big ) n . \\
\end{align*}
Let $c_i := \rho^i(x^i \csemi \xn, x^{\pi(i)})$ and $c'_i := \rho^i(x^i \csemi \xn x, x^{\pi(i)})$. The difference of products above is
\begin{align*}
 \big ( \prod_{i=1}^k \rho^i(x^i \csemi \xn x, x^{\pi(i)}) - \prod_{i=1}^k \rho^i(x^i \csemi \xn, x^{\pi(i)}) \big ) &= \big (c'_1 c'_2 \dots c'_k - c_1 c_2 \dots c_k \big ) \\
 &= (c'_1 - c_1) (c'_2 \dots c'_k) + c_1 (c'_2 \dots c'_k - c_2 \dots c_k) \\
 &= \sum_{i=1}^k (c'_i - c_i) \Big ( \prod_{j < i} c_j \Big ) \Big ( \prod_{j > i} c'_j \Big ),
\end{align*}
and
\begin{equation*}
\dr(x) = (1 - \mu(x))^{-1} \lim_{n \to \infty} \sum_{i=1}^k n (c'_i - c_i) \Big ( \prod_{j < i} c_j \Big ) \Big ( \prod_{j > i} c'_j \Big ) .
\end{equation*}
By the hypothesis on the rate of change of $\rho^i$ and the identity $n \left ( \mu(x^i \csemi \xn x, x^{\pi(i)}) - \mu(x^i \csemi \xn, x^{\pi(i)}) \right ) = 1 - \mu(x^i \cbar x^{\pi(i)})$, we have 
\begin{equation*}
\lim_{n \to \infty} n (c'_i - c_i) = 1 - \mu(x^i \cbar x^{\pi(i)}) .
\end{equation*}
Since the limits of $c'_i$ and $c_i$ are both $\mu(x^i \cbar x^{\pi(i)})$, we deduce that
\begin{equation*}
\dr(x) = \frac{\sum_{i=1}^k \big (1 - \mu(x^i \cbar x^{\pi(i)}\big ) \prod_{j \ne i} \mu(x^j \cbar x^{\pi_j(x)})}{1 - \mu(x)} .
\end{equation*}
Now, if $\mu(x) > 0$ then also $\mu(x^i \csemi x^{\pi(i)}) > 0$ for each factor $x^i$. Hence $\dr(x) > 0$.
\end{proof}

\subsection{Tabular Density Models (Corollary \ref{cor:atomic_models})}
We shall prove the following, which includes Corollary \ref{cor:atomic_models} as a special case. 

\begin{lem}\label{lem:atomic_models_extended}
Consider $\phi : \cX \times \cX^* \to \bR^+$. Suppose that for all $(x_n : n \in \bN)$ and every $x \in \cX$
\begin{enumerate}
    \item{$\lim\limits_{n \to \infty} \tfrac{1}{n} \sum\limits_{x \in \cX} \phi(x, \xn) = 0$, and}
    \item{$\lim\limits_{n \to \infty} \big (\phi(x, \xn x) - \phi(x, \xn) \big ) = 0$.}
\end{enumerate}
Let $\rho_n(x)$ be the count-based estimator
\begin{equation*}
\rho_n(x) = \frac{N_n(x) + \phi(x, \xn)}{n + \sum_{x \in \cX} \phi(x, \xn)}.
\end{equation*}
If $\hN_n$ is the pseudo-count corresponding to $\rho_n$ then $\hN_n(x) / N_n(x) \to 1$ for all $x$ with $\mu(x) > 0$. 
\end{lem}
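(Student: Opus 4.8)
The plan is to reduce everything to Theorem \ref{thm:ratio_of_counts}: I will verify that the count-based estimator $\rho_n$ satisfies Assumption \ref{assum:limit_assumptions}, compute the two limits $r(x)$ and $\dr(x)$, and then substitute them into the formula of Theorem \ref{thm:ratio_of_counts}. Since we restrict to $x$ with $\mu(x) > 0$, both $\mu_n(x) \to \mu(x)$ and $\mu'_n(x) \to \mu(x)$ are eventually bounded away from $0$, so all the ratios below are well defined for large $n$.

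First I would compute $r(x)$. Writing $\Phi_n := \sum_{x' \in \cX} \phi(x', \xn)$, hypothesis 1 gives $\Phi_n/n \to 0$, and since $\phi \ge 0$ this forces $\phi(x,\xn)/n \le \Phi_n/n \to 0$ as well. Dividing the numerator and denominator of $\rho_n(x)/\mu_n(x)$ by $n$ and using $\mu_n(x) \to \mu(x) > 0$ then yields $r(x) = 1$.

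The substantive step is $\dr(x)$. Using Lemma \ref{lem:empirical_rate_of_change} I would rewrite $\dr(x) = \lim_n \frac{n(\rho'_n(x) - \rho_n(x))}{1 - \mu'_n(x)}$, whose denominator tends to $1 - \mu(x)$. To evaluate the numerator I expand $\rho'_n(x) - \rho_n(x)$ as a single fraction: appending one copy of $x$ raises the numerator of $\rho_n(x)$ by $1 + \delta_n$, with $\delta_n := \phi(x,\xn x) - \phi(x,\xn)$, and the denominator by $1 + \varepsilon_n$, with $\varepsilon_n := \Phi'_n - \Phi_n$ where $\Phi'_n$ is the normalizer after appending. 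Writing $A, B$ for the numerator and denominator of $\rho_n(x)$ and $B'$ for the denominator of $\rho'_n(x)$, a short calculation gives $n(\rho'_n(x) - \rho_n(x)) = \frac{n(B-A) + n\delta_n B - n A \varepsilon_n}{BB'}$. Here the leading term $n(B-A)/(BB') \to 1 - \mu(x)$ (using $\Phi_n = o(n)$ and $\mu_n(x) \to \mu(x)$), the $\delta_n$ term vanishes because $\delta_n \to 0$ by hypothesis 2 while $B'/n \to 1$, and the $\varepsilon_n$ term equals $\rho_n(x)\,\tfrac{n\varepsilon_n}{B'}$ and vanishes once $\varepsilon_n \to 0$; hence $n(\rho'_n(x) - \rho_n(x)) \to 1 - \mu(x)$ and $\dr(x) = 1 > 0$.

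With $r(x) = \dr(x) = 1$ verified, Assumption \ref{assum:limit_assumptions} holds and Theorem \ref{thm:ratio_of_counts} applies: its formula collapses to $\frac{r(x)}{\dr(x)} \cdot \frac{1 - \mu(x) r(x)}{1 - \mu(x)} = \frac{1 - \mu(x)}{1 - \mu(x)} = 1$, which is the claim. I expect the main obstacle to be precisely the $\dr(x)$ computation, and within it the control of the aggregate increment $\varepsilon_n = \Phi'_n - \Phi_n$ of the normalizing total when a single symbol is appended: hypothesis 2 bounds directly only the diagonal increment $\delta_n$, whereas $\varepsilon_n$ sums the changes of $\phi(x',\cdot)$ over all $x'$. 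The delicate point will be to argue, leaning on hypothesis 1 to keep $\Phi_n, \Phi'_n = o(n)$ and on the single-symbol nature of the update, that this aggregate increment nevertheless tends to $0$, so that the $\varepsilon_n$ term does not contaminate the limit and $\dr(x) = 1$ follows.
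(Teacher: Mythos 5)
You set up the proof exactly as the paper does: verify Assumption~\ref{assum:limit_assumptions} for this estimator with $r(x)=\dr(x)=1$, then invoke Theorem~\ref{thm:ratio_of_counts}, whose formula collapses to $1$. Your computation of $r(x)$ is correct, and your single-fraction expansion of $n\big(\rho'_n(x)-\rho_n(x)\big)$ is term-for-term the identity the paper derives, namely (in your notation)
\begin{equation*}
\rho'_n(x)-\rho_n(x) = \frac{1}{n+\Phi_n}\Big[\,1-\rho'_n(x) + \delta_n - \rho'_n(x)\,\varepsilon_n \Big],
\end{equation*}
after which Lemma~\ref{lem:empirical_rate_of_change} turns $\mu'_n(x)-\mu_n(x)$ into $(1-\mu'_n(x))/n$. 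So the plan and the algebra match the paper's.

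The gap is the step you explicitly defer: $\varepsilon_n=\Phi'_n-\Phi_n\to 0$. This is not a technicality that hypothesis 1 and ``the single-symbol nature of the update'' can supply; that route provably fails. Hypothesis 1 only gives $\Phi_n=o(n)$, which is compatible with $\varepsilon_n$ blowing up along a subsequence even while hypothesis 2 holds: enumerate $\cX=\{z_1,z_2,\dots\}$ and let $\phi$ add $m^{-3/4}$ to each of $\phi(z_1,\cdot),\dots,\phi(z_m,\cdot)$ whenever the sequence length reaches a perfect square $m=k^2$. Every per-state increment is at most $m^{-3/4}\to 0$, and $\Phi_n=O(n^{3/4})=o(n)$, so both hypotheses hold; yet $\varepsilon_n=(n+1)^{1/4}\to\infty$ along that subsequence, where one then computes $\rho'_n(x)-\rho_n(x)\approx-\mu(x)n^{-3/4}$ and hence $\hN_n(x)/N_n(x)\to 0$ rather than $1$. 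What the paper actually does at this juncture is different from what you propose: it applies hypothesis 2 \emph{per state} --- reading it as $\phi(x',\xn x)-\phi(x',\xn)\to 0$ for every $x'$, not only the diagonal increment $\delta_n$ you retained --- and then passes the limit through the sum over $x'$, asserting in one line that pointwise convergence of the increments forces $\varepsilon_n\to 0$. That limit--sum interchange is the ingredient your proposal is missing, and (as the example above shows) it is only legitimate when $\cX$ is finite or the increments are suitably dominated, so the paper's own one-line claim is doing real, and somewhat informal, work there.
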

Condition 2 is satisfied if $\phi_n(x, \xn) = u_n(x) \phi_n$ with $\phi_n$ monotonically increasing in $n$ (but not too quickly!) and $u_n(x)$ converging to some distribution $u(x)$ for all sequences $(x_n : n \in \bN)$. This is the case for most tabular density models. 

\begin{proof}
We will show that the condition on the rate of change required by Proposition \ref{prop:directed_graphical_models} is satisfied under the stated conditions. 
Let $\phi_n(x) := \phi(x, \xn)$, $\phi'_n(x) := \phi(x, \xn x)$, $\phi_n := \sum_{x \in \cX} \phi_n(x)$ and $\phi'_n := \sum_{x \in \cX} \phi'_n(x)$.
By hypothesis,
\begin{equation*}
\rho_n(x) = \frac{N_n(x) + \phi_n(x)}{n + \phi_n} \qquad \qquad \rho'_n(x) = \frac{N_n(x) + \phi'_n(x) + 1}{n + \phi'_n + 1}.
\end{equation*}
Note that we do not require $\phi_n(x) = \phi'_n(x)$. Now
\begin{align*}
\rho'_n(x) - \rho_n(x) &= \frac{n + \phi_n}{n + \phi_n} \rho'_n(x) - \rho_n(x) \\
&= \frac{n + 1 + \phi'_n}{n + \phi_n} \rho'_n(x) - \rho_n(x) - \frac{(1 + (\phi'_n - \phi_n)) \rho'_n(x)}{n + \phi_n} \\
&= \frac{1}{n + \phi_n} \Big [ (N_n(x) + 1 + \phi'_n(x) - (N_n(x) + \phi_n(x)) - (1 + (\phi'_n - \phi_n)) \rho'_n(x) \Big ] \\
&= \frac{1}{n + \phi_n} \Big [ 1 - \rho'_n(x) + \big ( \phi'_n(x) - \phi_n(x) \big ) - \rho'_n(x) \big ( \phi'_n - \phi_n \big ) \Big ] .
\end{align*}
Using Lemma \ref{lem:empirical_rate_of_change} we deduce that
\begin{equation*}
\frac{\rho'_n(x) - \rho_n(x)}{\mu'_n(x) - \mu_n(x)} = \frac{n}{n + \phi_n} \; \frac{1 - \rho'_n(x) + \phi'_n(x) - \phi_n(x) + \rho'_n(x) ( \phi'_n - \phi_n )}{1 - \mu'_n(x)} . 
\end{equation*}
Since $\phi_n = \sum_x \phi_n(x)$ and similarly for $\phi'_n$, then $\phi'_n(x) - \phi_n(x) \to 0$ pointwise implies that $\phi'_n - \phi_n \to 0$ also. For any $\mu(x) > 0$,
\begin{align*}
0 \le \lim_{n \to \infty} \frac{\phi_n(x)}{N_n(x)} &\overset{(a)}{\le} \lim_{n \to \infty} \frac{\sum_{x \in \cX} \phi_n(x)}{N_n(x)} \\
&= \lim_{n \to \infty} \frac{\sum_{x \in \cX} \phi_n(x)}{n} \frac{n}{N_n(x)} \\
&\overset{(b)}{=} 0,
\end{align*}
where a) follows from $\phi_n(x) \ge 0$ and b) is justified by $n / N_n(x) \to \mu(x)^{-1} > 0$ and the hypothesis that $\sum_{x \in \cX} \phi_n(x) / n \to 0$.
Therefore $\rho_n(x) \to \mu(x)$. Hence 
\begin{equation*}
\lim_{n \to \infty} \frac{\rho'_n(x) - \rho_n(x)}{\mu'_n(x) - \mu_n(x)} = \lim_{n \to \infty} \frac{n}{n + \phi_n} \; \frac{1 - \rho'_n(x)}{1 - \mu'_n(x)} = 1. 
\end{equation*}
Since $\rho_n(x) \to \mu(x)$, we further deduce from Theorem \ref{thm:ratio_of_counts} that
\begin{equation*}
\lim_{n \to \infty} \frac{\hN_n(x)}{N_n(x)} = 1.\qedhere
\end{equation*}
\end{proof}
The condition $\mu(x) > 0$, which was also needed in Proposition \ref{prop:directed_graphical_models}, is necessary for the ratio to converge to 1: for example, if $N_n(x)$ grows as $O(\log n)$
but $\phi_n(x)$ grows as $O(\sqrt{n})$ (with $|\cX|$ finite) then $\hN_n(x)$ will grow as the 
larger $\sqrt{n}$. 

\section{Experimental Methods}

\subsection{CTS Density Model}

Our state space $\cX$ is the set of all preprocessed Atari 2600 frames.\footnote{Technically, the ALE is partially observable and a frame is an observation, not a state. In many games, however, the current frame is sufficiently informative to guide exploration.} Each raw frame is composed of
$210 \times 160$ 7-bit NTSC pixels \citep{bellemare13arcade}. We preprocess these frames by
first converting them to grayscale (luminance), then downsampling to $42 \times 42$ by averaging
over pixel values (Figure \ref{fig:preprocessing}).

\begin{figure*}
\center{
\includegraphics[height=2in]{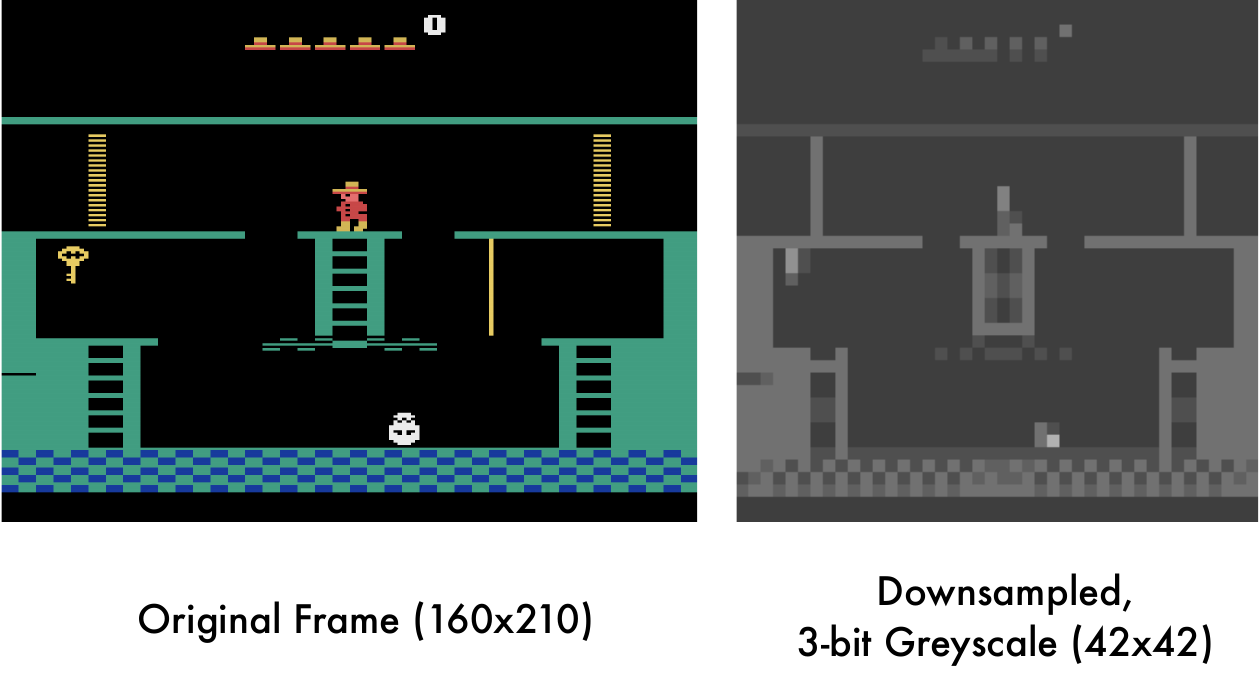}
}
\caption{Sample preprocessed image provided to the CTS model (\textbf{right}), along with the original frame (\textbf{left}). Although details are lost, objects can still be made out.\label{fig:preprocessing}}
\end{figure*}

Aside from this preprocessing, our model is very similar to the model used by 
\citet{bellemare14skip} and \citet{veness15compress}.
The CTS density model treats $x \in \cX$ as a factored state, where each $(i,j)$
pixel corresponds to a factor $x^{i,j}$. The parents of this factor are its 
upper-left neighbours, i.e. pixels $(i-1,j)$, $(i,j-1)$, $(i-1,j-1)$ and $(i+1,j-1)$ (in this order).
The probability of $x$ is then the product of the probability assigned to its factors. Each
factor is modelled using a location-dependent CTS model, which predicts the pixel's colour value
conditional on some, all, or possibly none, of the pixel's parents (Figure \ref{fig:filter}).

\begin{figure*}
\center{
\includegraphics{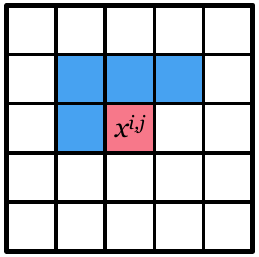}
}
\caption{Depiction of the CTS ``filter''. Each downsampled pixel is predicted by a location-specific model which can condition on the pixel's immediate neighbours (in blue).\label{fig:filter}}
\end{figure*}

\subsection{A Taxonomy of Exploration}

We provide in Table \ref{table:taxonomy} a rough taxonomy of the Atari 2600 games available through the ALE in terms of the difficulty of exploration.

We first divided the games into two groups: those for which local exploration (e.g. $\epsilon$-greedy) is sufficient to achieve a high scoring policy (\emph{easy}), and those for which it is not (\emph{hard}). For example, \gamename{Space Invaders} versus \gamename{Pitfall!}. We further divided the \emph{easy} group based on whether an $\epsilon$-greedy scheme finds a \emph{score exploit}, that is maximizes the score without achieving the game's stated objective. For example, \gamename{Kung-Fu Master} versus \gamename{Boxing}. While this distinction is not directly used here, score exploits lead to behaviours which are optimal from an ALE perspective but uninteresting to humans. We divide the games in the \emph{hard} category into dense reward games (\gamename{Ms. Pac-Man}) and sparse reward games (\gamename{Montezuma's Revenge}).

\begin{table}
\center{
\scriptsize{
\begin{tabular}{|cc|c||c|c|}
\hline
\multicolumn{3}{c}{\normalsize{Easy Exploration}} & \multicolumn{2}{c}{\normalsize{Hard Exploration}} \\
\hline
\hline
\multicolumn{2}{c}{\small{Human-Optimal}} & \multicolumn{1}{c}{\small{Score Exploit}} & \multicolumn{1}{c}{\small{Dense Reward}} & \multicolumn{1}{c}{\small{Sparse Reward}} \\
\hline
\gamename{Assault } &\gamename{ Asterix } &\gamename{ Beam Rider } &\gamename{ Alien } &\gamename{ Freeway } \\
\hline
\gamename{Asteroids } &\gamename{ Atlantis } &\gamename{ Kangaroo } &\gamename{ Amidar } &\gamename{ Gravitar } \\
\hline
\gamename{Battle Zone } &\gamename{ Berzerk } &\gamename{ Krull } &\gamename{ Bank Heist } &\gamename{ Montezuma's Revenge } \\
\hline
\gamename{Bowling } &\gamename{ Boxing } &\gamename{ Kung-fu Master } &\gamename{ Frostbite } &\gamename{ Pitfall! } \\
\hline
\gamename{Breakout } &\gamename{ Centipede } &\gamename{ Road Runner } &\gamename{ H.E.R.O. } &\gamename{ Private Eye } \\
\hline
\gamename{Chopper Cmd } &\gamename{ Crazy Climber } &\gamename{ Seaquest } &\gamename{ Ms. Pac-Man } &\gamename{ Solaris } \\
\hline
\gamename{Defender } &\gamename{ Demon Attack } &\gamename{ Up n Down } & \gamename{Q*Bert} & \gamename{Venture} \\
\hline
\gamename{Double Dunk } &\gamename{ Enduro } & \gamename{ Tutankham } &\gamename{ Surround } & \\
\hline
\gamename{Fishing Derby } &\gamename{ Gopher } &  &\gamename{ Wizard of Wor } & \\
\hline
\gamename{Ice Hockey } &\gamename{ James Bond } &  &\gamename{ Zaxxon } & \\
\hline
\gamename{Name this Game } &\gamename{ Phoenix } &  & & \\
\hline
\gamename{Pong } &\gamename{ River Raid } & &  & \\
\hline
\gamename{Robotank } &\gamename{ Skiing } &  &  & \\
\hline
\gamename{Space Invaders } &\gamename{ Stargunner } &  &  & \\
\hline
\end{tabular}
}}
\caption{A rough taxonomy of Atari 2600 games according to their exploration difficulty.\label{table:taxonomy}}
\end{table}

\subsection{Exploration in \gamename{Montezuma's Revenge}}

\textsc{Montezuma's Revenge} is divided into three levels, each composed of 24 rooms arranged in
a pyramidal shape (Figure \ref{fig:montezuma_revenge}). As discussed above, each room poses
a number of challenges: to escape the very first room, the agent must climb ladders, dodge a 
creature, pick up a key, then backtrack to open one of two doors. The number of rooms reached by
an agent is therefore a good measure of its ability. 
\begin{figure*}
\center{
\includegraphics[width=5in]{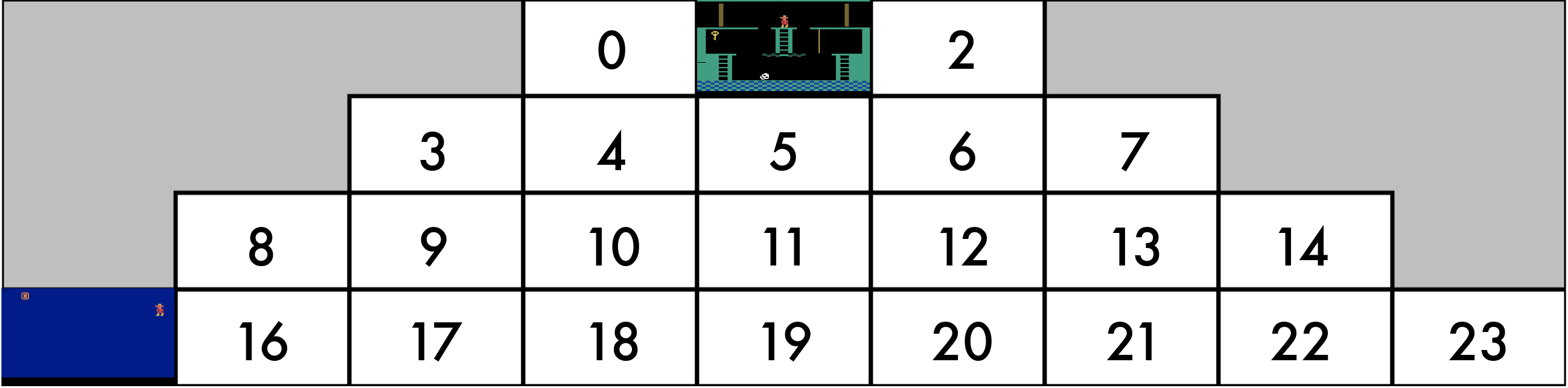}
}
\caption{Layout of levels in \gamename{Montezuma's Revenge}, with rooms numbered from 0 to 23.
The agent begins in room 1 and completes the level upon reaching room 15 (depicted).\label{fig:montezuma_revenge}}
\end{figure*}
By accessing the game RAM, we recorded the location of the agent at each step during the course of
training.\footnote{We emphasize that the game RAM is not made available to the agent, and is solely
used here in our behavioural analysis.} We computed the visit count to each room, averaged over
epochs each lasting one million frames. From this information we constructed a map of the agent's
``known world'', that is, all rooms visited at least once. 
The agent's current room number ranges from 0 to 23 (Figure \ref{fig:montezuma_revenge}) and is
stored at RAM location 0x83. Figure \ref{fig:montezuma_revenge_all_maps} shows the set of rooms
explored by our DQN agents at different points during training. 

Figure \ref{fig:montezuma_revenge_all_maps} paints a clear picture: after 50 million frames, the
agent using exploration bonuses has seen a total of 15 rooms, while the no-bonus agent has seen 
two. At that point in time, our agent achieves an average score of \textbf{2461}; by 100 million
frames, this figure stands at \textbf{3439}, higher than anything previously reported.
We believe the success of our method in this game is a strong indicator of the 
usefulness of pseudo-counts for exploration.

We remark that without mixing in the Monte-Carlo return, our bonus-based agent still explores
significantly more than the no-bonus agent. However, the deep network seems unable to maintain
a sufficiently good approximation to the value function, and performance quickly deteriorates.
Comparable results using the A3C method provide another example of the practical importance of
eligibility traces and return-based methods in reinforcement learning.

\begin{figure*}
\center{
\includegraphics[width=\textwidth,clip]{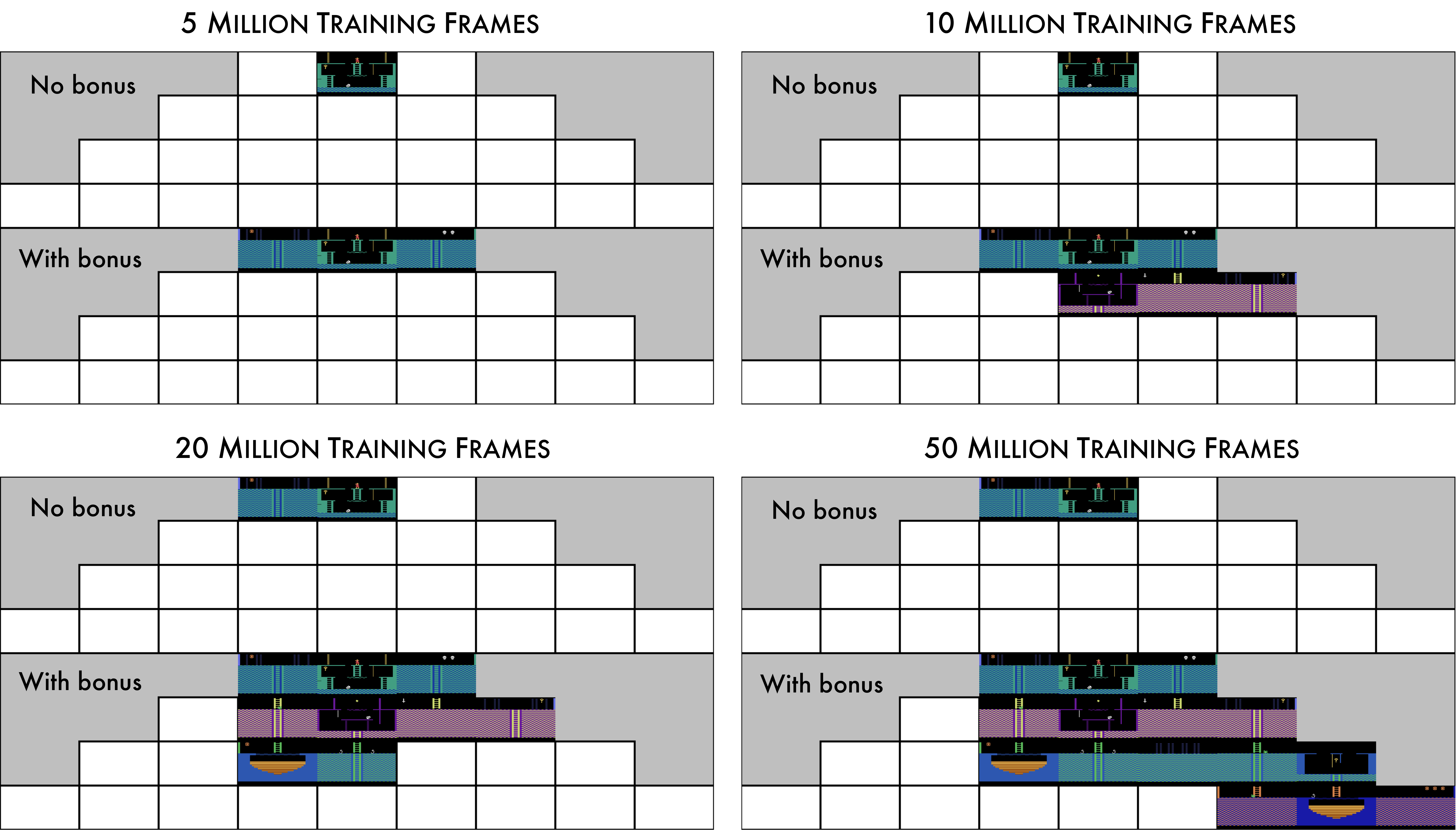}
}
\caption{``Known world'' of a DQN agent trained over time, with (\textbf{bottom}) and without (\textbf{top}) count-based exploration bonuses, in \gamename{Montezuma's Revenge}.\label{fig:montezuma_revenge_all_maps}}
\end{figure*}

\subsection{Improving Exploration for Actor-Critic Methods}

Our implementation of A3C was along the lines mentioned in \cite{mnih16asynchronous} and uses 16 threads. Each thread corresponds to an actor learner and maintains a copy of the density model. All the threads are synchronized with the master thread at regular intervals of 250,000 steps. We followed the same training procedure as that reported in the A3C paper with the following additional steps:
We update our density model with the states generated by following the policy. During the policy gradient step, we compute the intrinsic rewards by querying the density model and add it to the extrinsic rewards before clipping them in the range $[-1, 1]$ as was done in the A3C paper. This resulted in minimal overhead in computation costs and the memory footprint was manageable ($<$ 32 GB) for most of the Atari games. Our training times were almost the same as the ones reported in the A3C paper.
We picked $\beta = 0.01$ after performing a short parameter sweep over the training games. The choice of training games is the same as mentioned in the A3C paper.

The games on which DQN achieves a score of 150\% or less of the random score are: \gamename{Asteroids}, \gamename{Double Dunk}, \gamename{Gravitar}, \gamename{Ice Hockey}, \gamename{Montezuma's Revenge}, \gamename{Pitfall!}, \gamename{Skiing}, \gamename{Surround}, \gamename{Tennis}, \gamename{Time Pilot}.

The games on which A3C achieves a score of 150\% or less of the random score are: \gamename{Battle Zone}, \gamename{Bowling}, \gamename{Enduro}, \gamename{Freeway}, \gamename{Gravitar}, \gamename{Kangaroo}, \gamename{Pitfall!}, \gamename{Robotank}, \gamename{Skiing}, \gamename{Solaris}, \gamename{Surround}, \gamename{Tennis}, \gamename{Time Pilot}, \gamename{Venture}.

The games on which A3C+ achieves a score of 150\% or less of the random score are: \gamename{Double Dunk}, \gamename{Gravitar}, \gamename{Ice Hockey}, \gamename{Pitfall!}, \gamename{Skiing}, \gamename{Solaris}, \gamename{Surround}, \gamename{Tennis}, \gamename{Time Pilot}, \gamename{Venture}.

Our experiments involved the stochastic version of the Arcade Learning Environment (ALE) without a terminal signal for life loss, which is now the default ALE setting. Briefly, the stochasticity is achieved by accepting the agent’ action at each frame with probability $1 - p$ and using the agent’s previous action during rejection. We used the ALE's default value of $p = 0.25$ as has been previously used in \cite{bellemare16increasing}. For comparison, Table \ref{table:all_the_a3c_results} also reports the deterministic + life loss setting also used in the literature.

Anecdotally, we found that using the life loss signal, while helpful in achieving high scores in
some games, is detrimental in \gamename{Montezuma's Revenge}. Recall that the life loss signal
was used by \citet{mnih15human} to treat each of the agent' lives as a separate episode. 
For comparison, after 200 million
frames A3C+ achieves the following average scores: 1) Stochastic + Life Loss: 142.50; 2) Deterministic + Life Loss: 273.70 3) Stochastic without Life Loss: 1127.05 4) Deterministic without Life Loss: 273.70. The maximum score achieved by 3) is 3600, in comparison to the maximum of 500 achieved by 1) and 3). This large discrepancy is not unsurprising when one considers that losing a life in
\gamename{Montezuma's Revenge}, and in fact in most games, is very different from restarting a
new episode.

\begin{figure*}
\center{
\includegraphics[width=\textwidth,trim={0.25cm 0.25cm 0.25cm 0.25cm},clip]{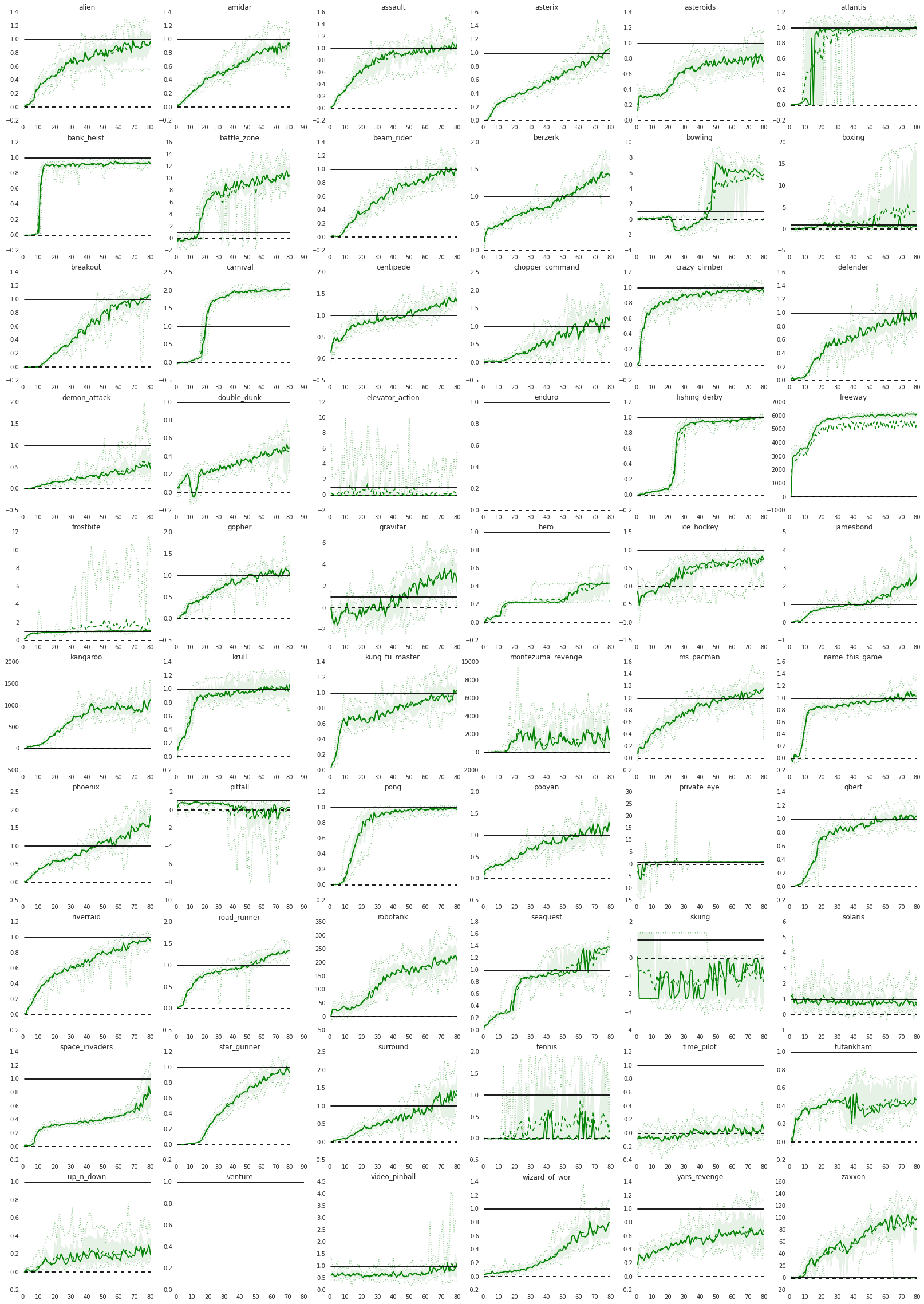}
}
\caption{Average A3C+ score (solid line) over 200 million training frames, for all Atari 2600 games, normalized relative to the A3C baseline. 
Dotted lines denote min/max over seeds, inter-quartile range is shaded, and the median is dashed.
\label{fig:a3c_all}} 
\end{figure*}

\subsection{Comparing Exploration Bonuses}

In this section we compare the effect of using different exploration bonuses derived from our density model. We consider the following variants:
\begin{itemize}
    \item{no exploration bonus,}
    \item{$\hN_n(x)^{-1/2}$, as per MBIE-EB \citep{strehl08analysis};}
    \item{$\hN_n(x)^{-1}$, as per BEB \citep{kolter09near}; and}
    \item{$\PG_n(x)$, related to compression progress \citep{schmidhuber08driven}.}
\end{itemize}
The exact form of these bonuses is analogous to \eqnref{square_root_bonus}.
We compare these variants after 10, 50, 100, and 200 million frames of training, again in the A3C setup. To compare scores across 60 games, we use inter-algorithm score distributions \citep{bellemare13arcade}. Inter-algorithm scores are normalized so that 0 corresponds to the worst score on a game, and 1, to the best. 
If $g \in \{ 1, \dots, m \}$ is a game and $z_{g,a}$ the inter-algorithm score on $g$ for algorithm $a$, then the score distribution function is 
\begin{equation*}
f(x) := \frac{| \{ g : z_{g,a} \ge x \} |}{m} .
\end{equation*}
The score distribution effectively depicts a kind of cumulative distribution, with a higher overall curve implying better scores across the gamut of Atari 2600 games. A higher curve at $x = 1$ implies top performance on more games; a higher curve at $x = 0$ indicates the algorithm does not perform poorly on many games. The scale parameter $\beta$ was optimized to $\beta = 0.01$ for each variant separately. 

Figure \ref{fig:exploration_bonuses_over_time} shows that, while prediction gain initially achieves strong performance, by 50 million frames all three algorithms perform equally well. By 200 million frames, the $\hN^{-1/2}$ exploration bonus outperforms both prediction gain and no bonus. The prediction gain achieves a decent, but not top-performing score on all games. This matches our earlier argument that using prediction gain results in too little exploration. We hypothesize that the poor performance of the $\hN^{-1}$ bonus stems from too abrupt a decay from a large to small intrinsic reward, although more experiments are needed. 
As a whole, these results show how using PG offers an advantage over the baseline A3C algorithm, which is furthered by using our count-based exploration bonus.
\begin{figure*}
\center{
\includegraphics[width=2.5in]{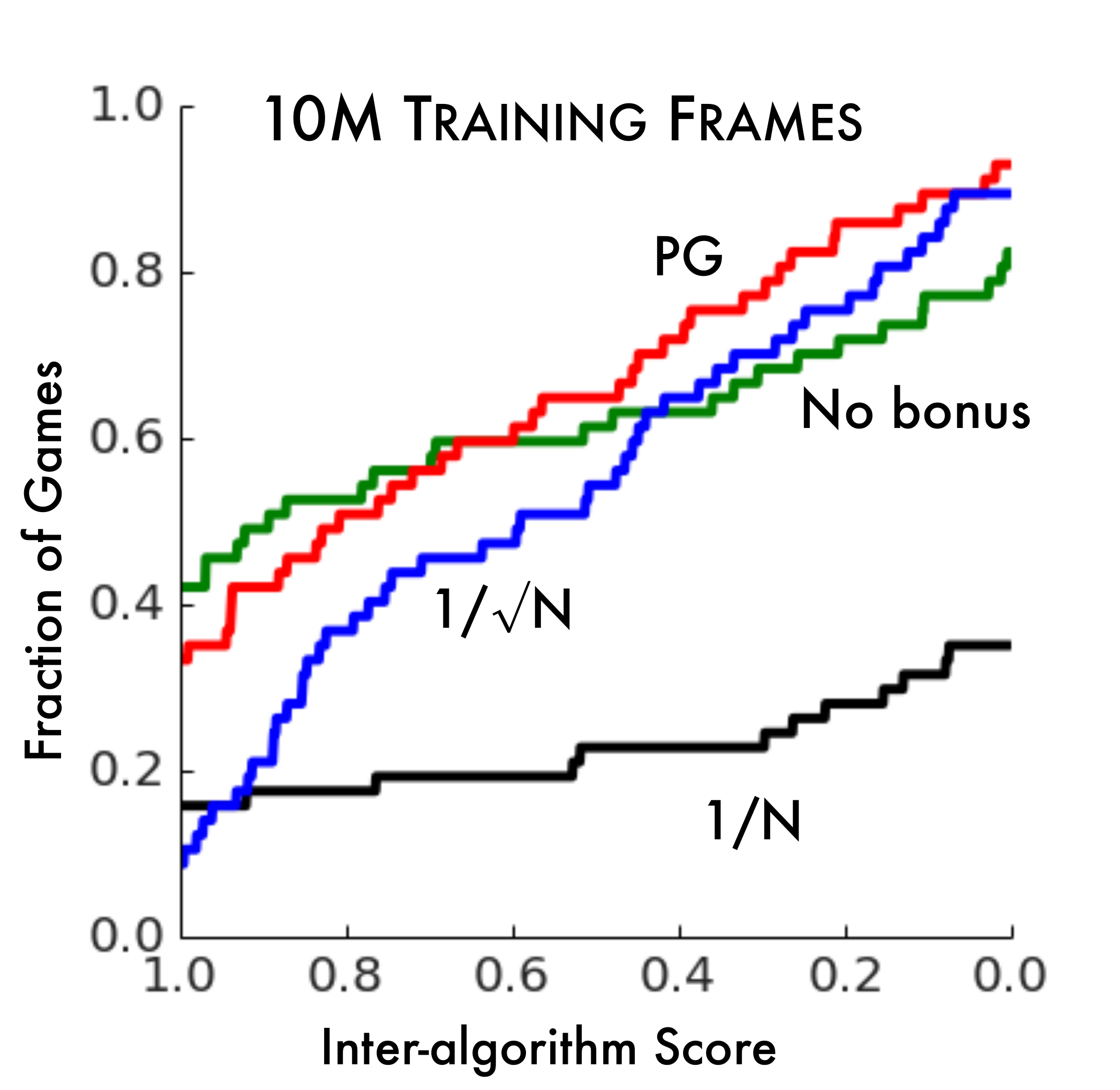}
\includegraphics[width=2.5in]{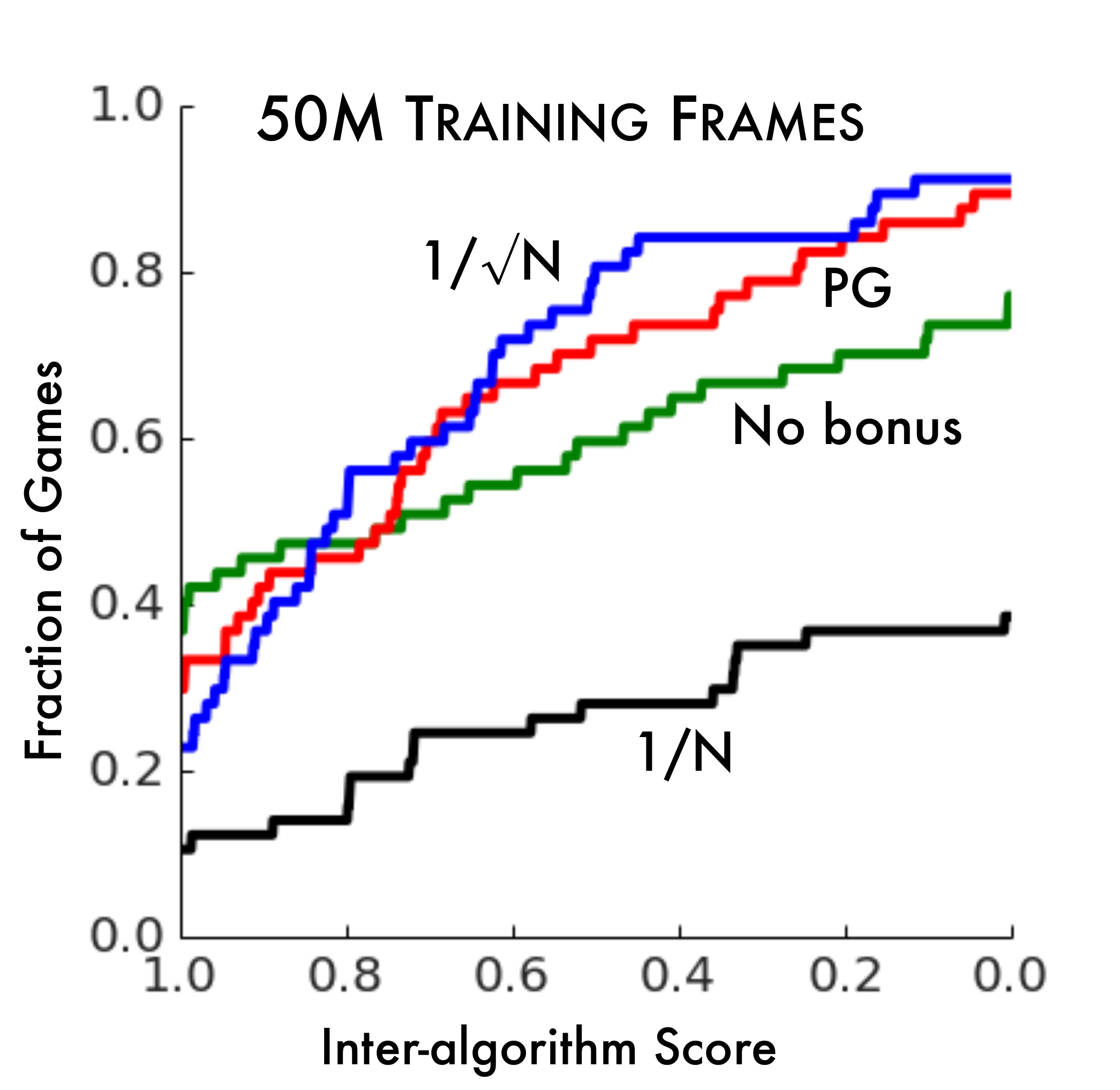}
\includegraphics[width=2.5in]{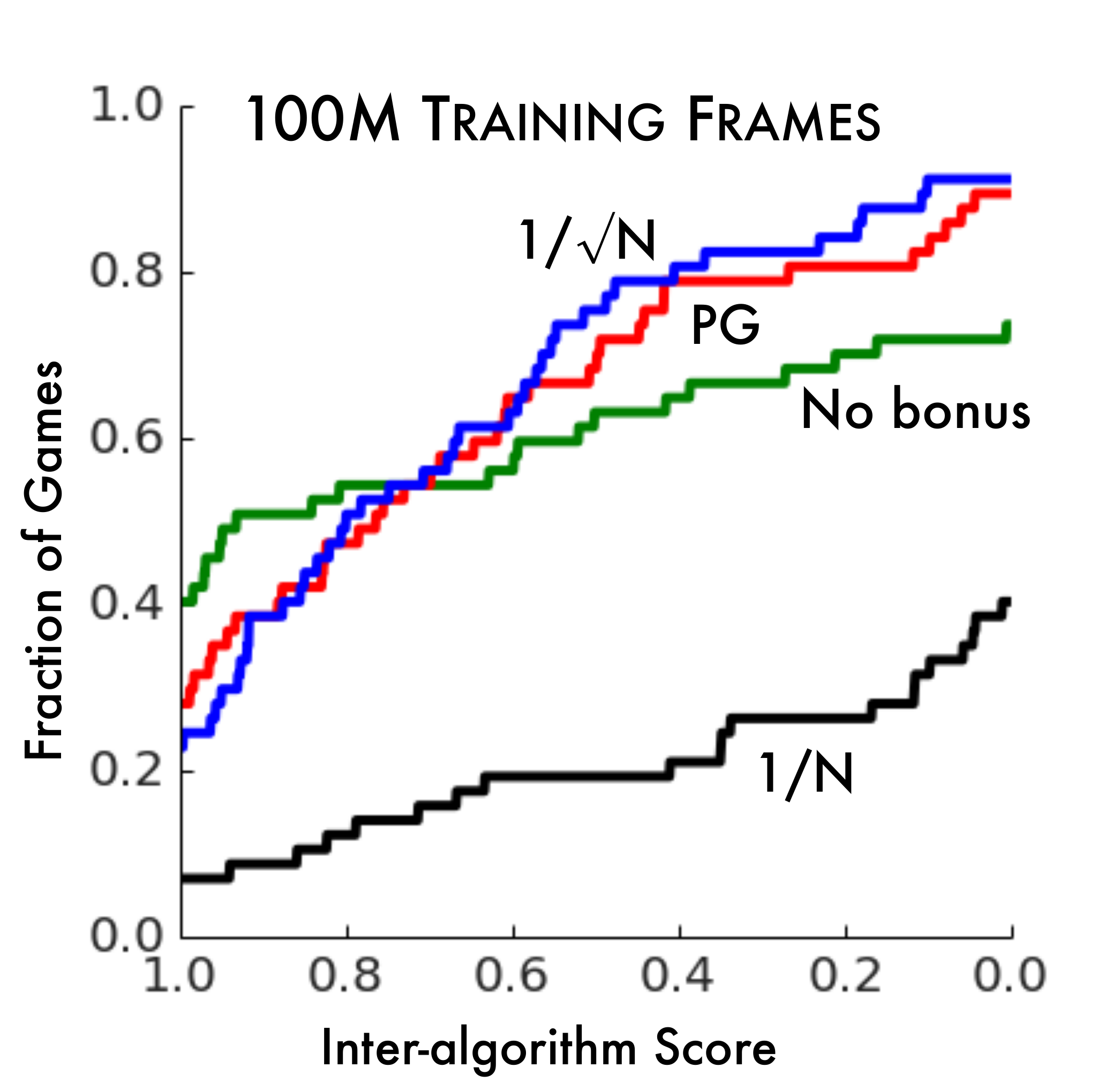}
\includegraphics[width=2.5in]{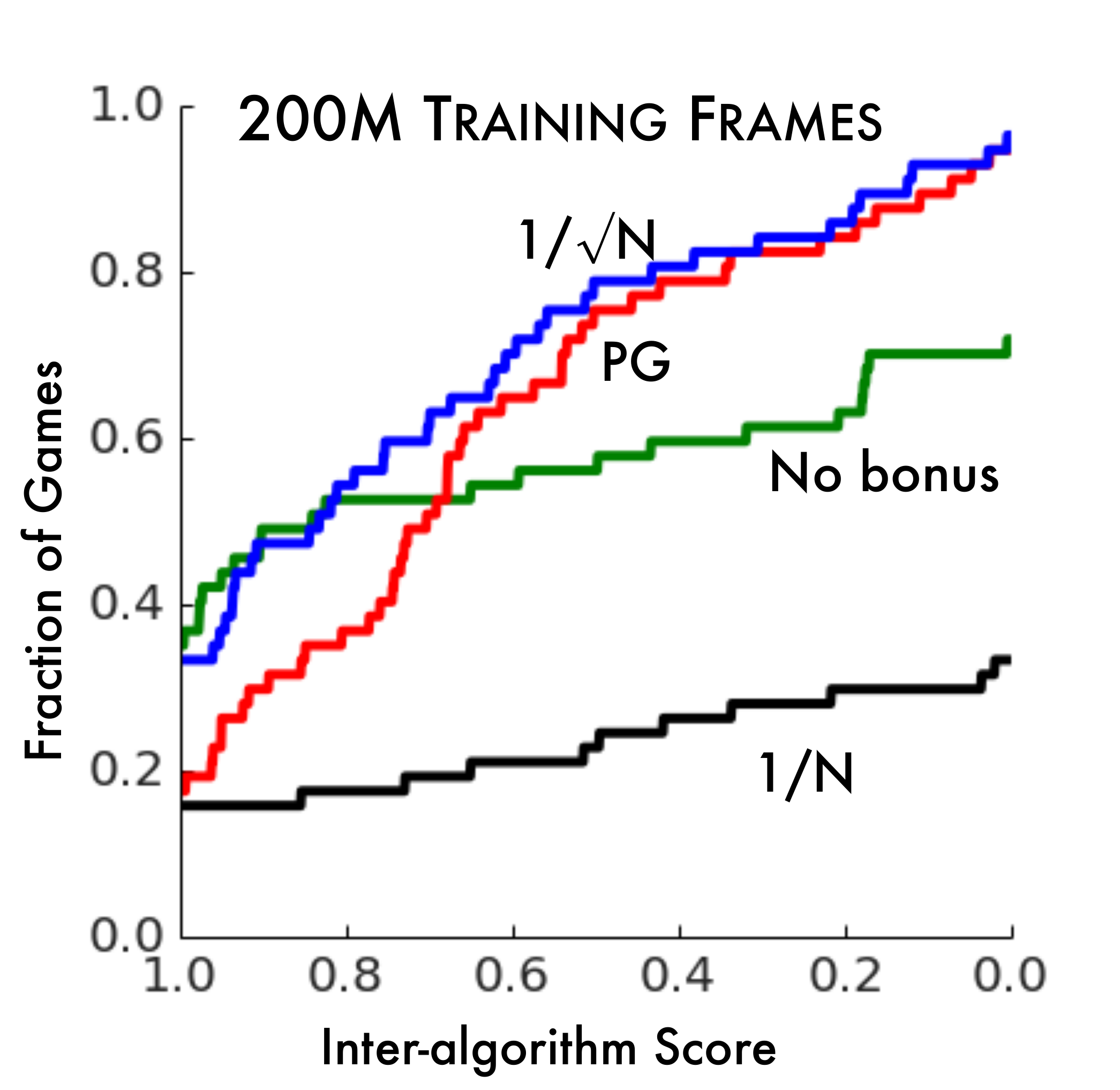}
}
\caption{Inter-algorithm score distribution for exploration bonus variants. 
For all methods the point $f(0) = 1$ is omitted for clarity. See text for details.\label{fig:exploration_bonuses_over_time}} 
\end{figure*}

\begin{table}
\center{
\small
\begin{tabular}{|r|r|r|r||r|r|r|}
\multicolumn{1}{c}{} & \multicolumn{3}{c}{Stochastic ALE} & \multicolumn{3}{c}{Deterministic ALE} \\
\hline
& A3C & A3C+ & DQN & A3C & A3C+ & DQN \\
\hline

\textsc{Alien} & \textbf{\textcolor{blue}{1968.40}} & 1848.33 & 1802.08  & 1658.25 & \textbf{\textcolor{black}{1945.66}} & 1418.47  \\
\hline
\textsc{Amidar} & \textbf{\textcolor{blue}{1065.24}} & 964.77 & 781.76  & \textbf{\textcolor{black}{1034.15}} & 861.14 & 654.40  \\
\hline
\textsc{Assault} & \textbf{\textcolor{blue}{2660.55}} & 2607.28 & 1246.83  & \textbf{\textcolor{black}{2881.69}} & 2584.40 & 1707.87  \\
\hline
\textsc{Asterix} & 7212.45 & \textbf{\textcolor{blue}{7262.77}} & 3256.07  & \textbf{\textcolor{black}{9546.96}} & 7922.70 & 4062.55  \\
\hline
\textsc{Asteroids} & \textbf{\textcolor{blue}{2680.72}} & 2257.92 & 525.09  & \textbf{\textcolor{black}{3946.22}} & 2406.57 & 735.05  \\
\hline
\textsc{Atlantis} & \textbf{\textcolor{blue}{1752259.74}} & 1733528.71 & 77670.03  & 1634837.98 & \textbf{\textcolor{black}{1801392.35}} & 281448.80  \\
\hline
\textsc{Bank Heist} & \textbf{\textcolor{blue}{1071.89}} & 991.96 & 419.50  & \textbf{\textcolor{black}{1301.51}} & 1182.89 & 315.93  \\
\hline
\textsc{Battle Zone} & 3142.95 & 7428.99 & \textbf{\textcolor{blue}{16757.88}}  & 3393.84 & 7969.06 & \textbf{\textcolor{black}{17927.46}}  \\
\hline
\textsc{Beam Rider} & \textbf{\textcolor{blue}{6129.51}} & 5992.08 & 4653.24  & 7004.58 & 6723.89 & \textbf{\textcolor{black}{7949.08}}  \\
\hline
\textsc{Berzerk} & 1203.09 & \textbf{\textcolor{blue}{1720.56}} & 416.03  & 1233.47 & \textbf{\textcolor{black}{1863.60}} & 471.76  \\
\hline
\textsc{Bowling} & 32.91 & \textbf{\textcolor{blue}{68.72}} & 29.07  & 35.00 & \textbf{\textcolor{black}{75.97}} & 30.34  \\
\hline
\textsc{Boxing} & 4.48 & 13.82 & \textbf{\textcolor{blue}{66.13}}  & 3.07 & 15.75 & \textbf{\textcolor{black}{80.17}}  \\
\hline
\textsc{Breakout} & 322.04 & \textbf{\textcolor{blue}{323.21}} & 85.82  & 432.42 & \textbf{\textcolor{black}{473.93}} & 259.40  \\
\hline
\textsc{Centipede} & 4488.43 & \textbf{\textcolor{blue}{5338.24}} & 4698.76  & 5184.76 & \textbf{\textcolor{black}{5442.94}} & 1184.46  \\
\hline
\textsc{Chopper Command} & 4377.91 & \textbf{\textcolor{blue}{5388.22}} & 1927.50  & 3324.24 & \textbf{\textcolor{black}{5088.17}} & 1569.84  \\
\hline
\textsc{Crazy Climber} & \textbf{\textcolor{blue}{108896.28}} & 104083.51 & 86126.17  & 111493.76 & \textbf{\textcolor{black}{112885.03}} & 102736.12  \\
\hline
\textsc{Defender} & \textbf{\textcolor{blue}{42147.48}} & 36377.60 & 4593.79  & \textbf{\textcolor{black}{39388.08}} & 38976.66 & 6225.82  \\
\hline
\textsc{Demon Attack} & \textbf{\textcolor{blue}{26803.86}} & 19589.95 & 4831.12  & \textbf{\textcolor{black}{39293.17}} & 30930.33 & 6183.58  \\
\hline
\textsc{Double Dunk} & \textbf{\textcolor{blue}{0.53}} & -8.88 & -11.57  & \textbf{\textcolor{black}{0.19}} & -7.84 & -13.99  \\
\hline
\textsc{Enduro} & 0.00 & \textbf{\textcolor{blue}{749.11}} & 348.30  & 0.00 & \textbf{\textcolor{black}{694.83}} & 441.24  \\
\hline
\textsc{Fishing Derby} & \textbf{\textcolor{blue}{30.42}} & 29.46 & -27.83  & \textbf{\textcolor{black}{32.00}} & 31.11 & -8.68  \\
\hline
\textsc{Freeway} & 0.00 & 27.33 & \textbf{\textcolor{blue}{30.59}}  & 0.00 & \textbf{\textcolor{black}{30.48}} & 30.12  \\
\hline
\textsc{Frostbite} & 290.02 & 506.61 & \textbf{\textcolor{blue}{707.41}}  & 283.99 & 325.42 & \textbf{\textcolor{black}{506.10}}  \\
\hline
\textsc{Gopher} & 5724.01 & \textbf{\textcolor{blue}{5948.40}} & 3946.13  & \textbf{\textcolor{black}{6872.60}} & 6611.28 & 4946.39  \\
\hline
\textsc{Gravitar} & 204.65 & \textbf{\textcolor{blue}{246.02}} & 43.04  & 201.29 & \textbf{\textcolor{black}{238.68}} & 219.39  \\
\hline
\textsc{H.E.R.O.} & \textbf{\textcolor{blue}{32612.96}} & 15077.42 & 12140.76  & \textbf{\textcolor{black}{34880.51}} & 15210.62 & 11419.16  \\
\hline
\textsc{Ice Hockey} & \textbf{\textcolor{blue}{-5.22}} & -7.05 & -9.78  & \textbf{\textcolor{black}{-5.13}} & -6.45 & -10.34  \\
\hline
\textsc{James Bond} & 424.11 & \textbf{\textcolor{blue}{1024.16}} & 511.76  & 422.42 & \textbf{\textcolor{black}{1001.19}} & 465.76  \\
\hline
\textsc{Kangaroo} & 47.19 & \textbf{\textcolor{blue}{5475.73}} & 4170.09  & 46.63 & 4883.53 & \textbf{\textcolor{black}{5972.64}}  \\
\hline
\textsc{Krull} & 7263.37 & \textbf{\textcolor{blue}{7587.58}} & 5775.23  & 7603.84 & \textbf{\textcolor{black}{8605.27}} & 6140.24  \\
\hline
\textsc{Kung-Fu Master} & \textbf{\textcolor{blue}{26878.72}} & 26593.67 & 15125.08  & \textbf{\textcolor{black}{29369.90}} & 28615.43 & 11187.13  \\
\hline
\textsc{Montezuma's Revenge} & 0.06 & \textbf{\textcolor{blue}{142.50}} & 0.02  & 0.17 & \textbf{\textcolor{black}{273.70}} & 0.00  \\
\hline
\textsc{Ms. Pac-Man} & 2163.43 & 2380.58 & \textbf{\textcolor{blue}{2480.39}}  & 2327.80 & \textbf{\textcolor{black}{2401.04}} & 2391.89  \\
\hline
\textsc{Name This Game} & 6202.67 & \textbf{\textcolor{blue}{6427.51}} & 3631.90  & 6087.31 & \textbf{\textcolor{black}{7021.30}} & 6565.41  \\
\hline
\textsc{Phoenix} & 12169.75 & \textbf{\textcolor{blue}{20300.72}} & 3015.64  & 13893.06 & \textbf{\textcolor{black}{23818.47}} & 7835.20  \\
\hline
\textsc{Pitfall} & \textbf{\textcolor{blue}{-8.83}} & -155.97 & -84.40  & \textbf{\textcolor{black}{-6.98}} & -259.09 & -86.85  \\
\hline
\textsc{Pooyan} & 3706.93 & \textbf{\textcolor{blue}{3943.37}} & 2817.36  & 4198.61 & \textbf{\textcolor{black}{4305.57}} & 2992.56  \\
\hline
\textsc{Pong} & \textbf{\textcolor{blue}{18.21}} & 17.33 & 15.10  & \textbf{\textcolor{black}{20.84}} & 20.75 & 19.17  \\
\hline
\textsc{Private Eye} & 94.87 & \textbf{\textcolor{blue}{100.00}} & 69.53  & 97.36 & \textbf{\textcolor{black}{99.32}} & -12.86  \\
\hline
\textsc{Q*Bert} & 15007.55 & \textbf{\textcolor{blue}{15804.72}} & 5259.18  & 19175.72 & \textbf{\textcolor{black}{19257.55}} & 7094.91  \\
\hline
\textsc{River Raid} & \textbf{\textcolor{blue}{10559.82}} & 10331.56 & 8934.68  & \textbf{\textcolor{black}{11902.24}} & 10712.54 & 2365.18  \\
\hline
\textsc{Road Runner} & 36933.62 & \textbf{\textcolor{blue}{49029.74}} & 31613.83  & 41059.12 & \textbf{\textcolor{black}{50645.74}} & 24933.39  \\
\hline
\textsc{Robotank} & 2.13 & 6.68 & \textbf{\textcolor{blue}{50.80}}  & 2.22 & 7.68 & \textbf{\textcolor{black}{40.53}}  \\
\hline
\textsc{Seaquest} & 1680.84 & \textbf{\textcolor{blue}{2274.06}} & 1180.70  & 1697.19 & 2015.55 & \textbf{\textcolor{black}{3035.32}}  \\
\hline
\textsc{Skiing} & -23669.98 & \textbf{\textcolor{blue}{-20066.65}} & -26402.39  & \textbf{\textcolor{black}{-20958.97}} & -22177.50 & -27972.63  \\
\hline
\textsc{Solaris} & 2156.96 & \textbf{\textcolor{blue}{2175.70}} & 805.66  & 2102.13 & \textbf{\textcolor{black}{2270.15}} & 1752.72  \\
\hline
\textsc{Space Invaders} & \textbf{\textcolor{blue}{1653.59}} & 1466.01 & 1428.94  & \textbf{\textcolor{black}{1741.27}} & 1531.64 & 1101.43  \\
\hline
\textsc{Star Gunner} & \textbf{\textcolor{blue}{55221.64}} & 52466.84 & 47557.16  & \textbf{\textcolor{black}{59218.08}} & 55233.43 & 40171.44  \\
\hline
\textsc{Surround} & -7.79 & \textbf{\textcolor{blue}{-6.99}} & -8.77  & \textbf{\textcolor{black}{-7.10}} & -7.21 & -8.19  \\
\hline
\textsc{Tennis} & \textbf{\textcolor{blue}{-12.44}} & -20.49 & -12.98  & -16.18 & -23.06 & \textbf{\textcolor{black}{-8.00}}  \\
\hline
\textsc{Time Pilot} & \textbf{\textcolor{blue}{7417.08}} & 3816.38 & 2808.92  & \textbf{\textcolor{black}{9000.91}} & 4103.00 & 4067.51  \\
\hline
\textsc{Tutankham} & \textbf{\textcolor{blue}{250.03}} & 132.67 & 70.84  & \textbf{\textcolor{black}{273.66}} & 112.14 & 75.21  \\
\hline
\textsc{Up and Down} & \textbf{\textcolor{blue}{34362.80}} & 8705.64 & 4139.20  & \textbf{\textcolor{black}{44883.40}} & 23106.24 & 5208.67  \\
\hline
\textsc{Venture} & 0.00 & 0.00 & \textbf{\textcolor{blue}{54.86}}  & \textbf{\textcolor{black}{0.00}} & \textbf{\textcolor{black}{0.00}} & \textbf{\textcolor{black}{0.00}}  \\
\hline
\textsc{Video Pinball} & 53488.73 & 35515.91 & \textbf{\textcolor{blue}{55326.08}}  & 68287.63 & \textbf{\textcolor{black}{97372.80}} & 52995.08  \\
\hline
\textsc{Wizard Of Wor} & \textbf{\textcolor{blue}{4402.10}} & 3657.65 & 1231.23  & \textbf{\textcolor{black}{4347.76}} & 3355.09 & 378.70  \\
\hline
\textsc{Yar's Revenge} & \textbf{\textcolor{blue}{19039.24}} & 12317.49 & 14236.94  & \textbf{\textcolor{black}{20006.02}} & 13398.73 & 15042.75  \\
\hline
\textsc{Zaxxon} & 121.35 & \textbf{\textcolor{blue}{7956.05}} & 2333.52  & 152.11 & \textbf{\textcolor{black}{7451.25}} & 2481.40  \\
\hline
\hline
Times Best  & 26 & 24 & 8  & 26 & 25 & 9  \\
\hline
\end{tabular}
}
\caption{Average score after 200 million training frames for A3C and  A3C+ (with $\hN_n^{-1/2}$ bonus), with a DQN baseline for comparison.\label{table:all_the_a3c_results}}
\end{table}

}

\end{document}